\newtheorem{theorem}{Theorem}
\theoremstyle{definition}
\newtheorem{example}{Example}[section]
\def\compactify{\itemsep=0pt \topsep=0pt \partopsep=0pt \parsep=0pt}
\DeclareMathOperator{\artanh}{artanh}
\DeclareMathOperator{\sech}{sech}
\newcommand{\R}{\mathbb{R}}
\newcommand{\Abs}[1]{\left\lvert #1 \right\rvert}
\newcommand{\Prob}[2][]{\mathbf{P}_{#1}\left( {#2} \right) }
\newcommand{\Probc}[3][]{\mathbf{P}_{#1}\left( {#2} \middle | {#3} \right) }
\newcommand{\Exv}[2][]{\mathbf{E}_{#1}\left[ #2 \right]}
\newcommand{\Exvc}[3][]{\mathbf{E}_{#1}\left[ #2 \middle | {#3} \right]}
\newcommand{\Var}[2][]{\mathbf{Var}_{#1}\left( #2 \right)}
\newcommand{\Varc}[3][]{\mathbf{Var}_{#1}\left( #2 \middle | {#3} \right)}
\newcommand{\Cov}[2][]{\mathbf{Cov}_{#1}\left( #2 \right)}
\newcommand{\Covc}[3][]{\mathbf{Cov}_{#1}\left( #2 \middle | {#3} \right)}
\newcommand{\F}{\mathcal{F}}
\newcommand{\norm}[1]{\left\| #1 \right\|}
\date{\displaydate{date}}
\title{Data Programming:\\Creating Large Training Sets, Quickly}
\newcommand{\finaledit}[1]{#1}
\author{
  Alexander Ratner, Christopher De Sa, Sen Wu, Daniel Selsam, Christopher R\'{e}\\
  %Department of Computer Science\\
  Stanford University\\
  %Stanford, CA 94305 \\
  \texttt{\{ajratner,cdesa,senwu,dselsam,chrismre\}@stanford.edu} \\
  %% examples of more authors
  %% \And
  %% Coauthor \\
  %% Affiliation \\
  %% Address \\
  %% \texttt{email} \\
  %% \AND
  %% Coauthor \\
  %% Affiliation \\
  %% Address \\
  %% \texttt{email} \\
  %% \And
  %% Coauthor \\
  %% Affiliation \\
  %% Address \\
  %% \texttt{email} \\
  %% \And
  %% Coauthor \\
  %% Affiliation \\
  %% Address \\
  %% \texttt{email} \\
}
\begin{document}
\lstset{
    language=Python,
    basicstyle=\tiny,
    escapeinside={(*@}{@*)}}
    
% \nipsfinalcopy is no longer used

\maketitle

\begin{abstract}
Large labeled training sets are the critical building blocks of supervised learning methods
and are key enablers of deep learning techniques. For some applications, creating labeled 
training sets is the most time-consuming and expensive part of applying machine learning. 
We therefore propose a paradigm for the programmatic creation of training sets called 
\textit{data programming} in which users express weak supervision strategies or domain heuristics 
as \textit{labeling functions}, which are programs that label subsets of the data,
but that are noisy and may conflict. We show that by explicitly representing this training set
labeling process as a generative model, we can ``denoise'' the generated training set, and 
establish theoretically that we can recover the parameters of these generative models in a
handful of settings. We then show how to modify a discriminative loss function to make it 
noise-aware, and demonstrate our method over a range of discriminative models including 
logistic regression and LSTMs. Experimentally, on the 2014 TAC-KBP Slot Filling challenge, 
we show that data programming would have led to a new winning score, and also show that 
applying data programming to an LSTM model leads to a TAC-KBP score almost 6 F1 points over 
a state-of-the-art LSTM baseline (and into second place in the competition). Additionally, 
in initial user studies we observed that data programming may be an easier way for non-experts 
to create machine learning models when training data is limited or unavailable.
\end{abstract}

%%%%% SECTION 1: Intro
\section{Introduction}
\label{secIntro}

Many of the major machine learning breakthroughs of the last
decade have been catalyzed by the release of a new labeled
training
dataset.\footnote{\tiny{\url{http://www.spacemachine.net/views/2016/3/datasets-over-algorithms}}}
Supervised learning approaches that use such datasets have
increasingly become key building blocks of applications throughout
science and industry.  This trend has also been fueled by the
recent empirical success of automated feature generation approaches,
notably deep learning methods such as long short term memory (LSTM)
networks~\cite{hochreiter1997long}, which ameliorate the burden of feature
engineering given large enough labeled training sets. For many
real-world applications, however, large hand-labeled training sets do
not exist, and are prohibitively expensive to create due to
requirements that labelers be experts in the application domain.
Furthermore, applications' needs often change, necessitating new or
modified training sets.

To help reduce the cost of training set creation, we propose
\textit{data programming}, a paradigm for the programmatic creation and modeling
of training datasets. Data programming provides a simple, unifying framework for
\textit{weak supervision}, in which training labels are noisy and may be from multiple, 
potentially overlapping sources. In data programming, users encode this weak supervision 
in the form of \textit{labeling functions}, which are user-defined programs that each
provide a label for some subset of the data, and collectively generate a large but 
potentially overlapping set of training labels. Many different weak supervision approaches 
can be expressed as labeling functions, such as strategies which utilize existing
knowledge bases (as in distant supervision~\cite{Mintz2009}), 
model many individual annotator's labels (as in crowdsourcing), or leverage a 
combination of domain-specific patterns and dictionaries. Because of this, labeling functions
may have widely varying error rates and may conflict on certain data points.
To address this, we model the labeling functions as a generative process, which lets us
automatically denoise the resulting training set by learning the
accuracies of the labeling functions along with their correlation structure.  In
turn, we use this model of the training set to optimize a stochastic
version of the loss function of the discriminative model that we
desire to train. 
We show that, given certain conditions on the labeling functions,
our method achieves the same asymptotic
scaling as supervised learning methods, but that our scaling depends on the
amount of \emph{unlabeled} data, and uses only a fixed number of labeling functions.

Data programming is in part motivated by the challenges that users
faced when applying prior programmatic supervision approaches, and is
intended to be a new software engineering paradigm for the creation and
management of training sets.
For example, consider the scenario when two labeling functions of
differing quality and scope overlap and possibly conflict on certain training examples; in prior approaches
the user would have to decide which one to use, or how to somehow
integrate the signal from both.  In data programming, we accomplish this
automatically by learning a model of the training set that includes both labeling functions. Additionally,
users are often aware of, or able to induce, dependencies between their labeling
functions.  In data programming, users can provide a dependency graph
to indicate, for example, that two labeling functions are similar, or
that one ``fixes'' or ``reinforces'' another.  
We describe cases in
which we can learn the strength of these dependencies, and for which
our generalization is again asymptotically identical to the supervised case.

One further motivation for our method is driven by the observation
that users often struggle with selecting \textit{features} 
for their models, which is a traditional development bottleneck given fixed-size training sets.
However, initial feedback from users
suggests that writing labeling functions in the framework of data
programming may be easier\finaledit{~\cite{ehrenberg2016data}}.
While the impact of a feature on end performance is
dependent on the training set and on statistical
characteristics of the model, a labeling function has a simple
and intuitive optimality criterion: that it labels data correctly.
Motivated by this, we explore whether we can flip the traditional
machine learning development process on its head, having users instead focus on 
generating training sets large enough to support automatically-generated features.

\paragraph*{Summary of Contributions and Outline}
Our first contribution is the \textit{data programming} framework,
in which users can implicitly describe a rich generative model for a training set
in a more flexible and general way than in previous approaches. In
Section~\ref{secDataProgramming}, we first explore a simple model in
which labeling functions are conditionally independent. We show here
that under certain conditions, the sample complexity is nearly the same as in
the labeled case. In
Section~\ref{sec:dep}, we extend our results to more sophisticated
data programming models, generalizing
related results in crowdsourcing~\cite{karger2011iterative}.  In
Section~\ref{secExperiments}, we validate our approach experimentally
on large real-world text relation extraction tasks in genomics,
pharmacogenomics and news domains, where
we show an average 2.34 point F1 score 
improvement over a baseline \finaledit{distant} supervision approach---including
what would have been a new
competition-winning score for the 2014 TAC-KBP Slot Filling competition.
Using LSTM-generated features, we \finaledit{additionally} would have placed second in
this competition, achieving a 5.98 point F1 score gain
over a state-of-the-art LSTM baseline~\cite{verga2015multilingual}.
Additionally, we describe promising feedback from a 
usability study with a group of bioinformatics users.

%%%%% SECTION 2: Related Work
\section{Related Work}
\label{secRelatedWork}

Our work builds on many previous approaches in machine learning.
\textit{Distant supervision} is one approach for programmatically
creating training sets.  The canonical example
is relation extraction from text, wherein a knowledge base of
known relations is heuristically mapped to
an input corpus~\cite{craven1999constructing,Mintz2009}.  Basic extensions
group examples by surrounding textual patterns, and cast the problem as a \textit{multiple
  instance learning}
one~\cite{riedel2010modeling,hoffmann2011knowledge}. Other extensions
model the accuracy of these surrounding textual patterns using a discriminative
feature-based model~\cite{roth2013feature}, or generative models such
as hierarchical topic
models~\cite{alfonseca2012pattern,roth2013combining,takamatsu2012reducing}.
Like our approach, these latter methods model a generative process of
training set creation, however in a proscribed way that is not based
on user input as in our approach. There is also a wealth of examples where
additional heuristic patterns used to label training data are
collected from unlabeled data~\cite{bunescu2007learning} or directly
from users~\cite{shin2015incremental,mallory2015large}, in a similar manner to
our approach, but without any framework to deal with the fact that said
labels are explicitly noisy.

\textit{Crowdsourcing} is widely used for various machine learning
tasks~\cite{krishna2016visual,gao2011harnessing}.  Of particular
relevance to our problem setting is the theoretical question of how to
model the accuracy of various experts without ground truth
available, classically raised
in the context of crowdsourcing~\cite{dawid1979maximum}.  More recent
results provide formal guarantees even in the absence of
labeled data using various approaches~\cite{karger2011iterative,parisi2014ranking,NIPS2014_5253,NIPS2014_5431,Dalvi:2013:ACB:2488388.2488414,joglekar2015comprehensive}.
Our model can capture the
\finaledit{basic model of the crowdsourcing setting, and can be considered equivalent}
in the independent case (Sec.~\ref{secDataProgramming}).
However, in addition to generalizing beyond getting inputs solely from human annotators,
we also model user-supplied dependencies between the
``labelers'' in our model, which is not natural within the context
of crowdsourcing.  Additionally, while crowdsourcing results focus on the regime
of a large number of labelers each labeling a small subset of
the data, we consider a small set of labeling functions
each labeling a large portion of the dataset.

\textit{Co-training} is a classic procedure for effectively utilizing both a small
amount of labeled data and a large amount of unlabeled data by selecting two
conditionally independent \textit{views} of the data~\cite{blum1998combining}.
In addition to not needing a set of labeled data, and allowing for more than two
views (labeling functions in our case), our approach allows explicit modeling of
dependencies between views, for example allowing observed issues with dependencies between
views to be explicitly modeled~\cite{krogel2004multi}.

\textit{Boosting} is a well known procedure for combining the output of
many ``weak'' classifiers to create a strong classifier in a supervised setting~\cite{schapire2012boosting}.
Recently, boosting-like methods have been proposed which leverage unlabeled data
in addition to labeled data, which is also used to set constraints
on the accuracies of the individual classifiers being ensembled~\cite{balsubramani2015scalable}.
This is similar in spirit to our approach, except that
labeled data is not explicitly necessary in ours, and richer dependency structures
between our ``heuristic'' classifiers (labeling functions) are supported.

The general case of \emph{learning with noisy labels} is treated both in classical~\cite{LUGOSI199279}
and more recent contexts~\cite{NIPS2013_5073}.  It has also been studied specifically
in the context of \textit{label-noise robust} logistic regression~\cite{bootkrajang2012label}.
We consider the more general scenario where multiple noisy labeling
functions can conflict and have dependencies.

%%%%% SECTION 3: The Data Programming Paradigm
\section{The Data Programming Paradigm}
\label{secDataProgramming}

In many applications, we would like to use machine learning, but we
face the following challenges: (i) \textit{hand-labeled} training data
is not available, and is prohibitively expensive to obtain in
sufficient quantities as it requires expensive domain expert \finaledit{labelers}; (ii)
\textit{related external knowledge bases} are either unavailable or
insufficiently specific, precluding a traditional distant supervision
or co-training approach; (iii) \textit{application specifications} are
in flux, changing the model we ultimately wish to learn.

In such a setting, we would like a simple, scalable and adaptable
approach for supervising a model applicable to our problem.  More
specifically, we would ideally like our approach to achieve
$\epsilon$ expected loss with high probability, given $O(1)$
\textit{inputs} of some sort from a domain-expert user, rather than
the traditional $\tilde{O}(\epsilon^{-2})$ hand-labeled training
examples required by most supervised methods (where $\tilde O$ notation hides
logarithmic factors).  To this end, we propose
\textit{data programming}, a paradigm for the
programmatic creation of training sets, which enables domain-experts to
more rapidly train machine learning systems and has the potential for this
type of scaling of expected loss.
In data programming, rather than manually
labeling each example, users instead describe the \textit{processes by
which} these points could be labeled by providing a set of heuristic
rules called \textit{labeling functions}.

In the remainder of this paper, we focus on a binary
classification task in which we have a distribution $\pi$ over object and class pairs
$(x,y) \in \mathcal{X} \times \{-1,1\}$, and we are concerned with minimizing the logistic loss under a 
linear model given some \emph{features},
\[
 l(w)
 =
 \Exv[(x,y) \sim \pi]{\log(1 + \exp(-w^T f(x) y))},
\] 
where without loss of generality, we assume that $\norm{f(x)} \le 1$.
Then, a labeling function $\lambda_i: \mathcal{X} \mapsto \{-1,0,1\}$
is a user-defined function that encodes some domain heuristic, which
provides a (non-zero) label for some subset of the objects.  As part
of a \textit{data programming specification}, a user provides some $m$
labeling functions, which we denote in vectorized form as $\lambda:
\mathcal{X} \mapsto \{-1,0,1\}^m$.

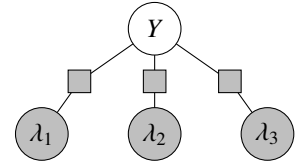
\begin{figure}
\centering
\begin{subfigure}{.7\textwidth}
\begin{lstlisting}
def lambda_1(x):
    return 1 if (x.gene,x.pheno) in KNOWN_RELATIONS_1 else 0

def lambda_2(x):
    return (*@-@*)1 if re.match(r'.*not cause.*', x.text_between) else 0

def lambda_3(x):
    return 1 if re.match(r'.*associated.*', x.text_between) 
            and (x.gene,x.pheno) in KNOWN_RELATIONS_2 else 0
\end{lstlisting}
\caption{An example set of three labeling functions written by a user.}
\label{ind-example-lfs}
\end{subfigure}%
\begin{subfigure}{.3\textwidth}
\centering
\begin{tikzpicture}[every node/.style={inner sep=0,outer sep=0}]

% True class Y
\draw (0,1.4) node[draw,circle,fill=white,minimum size=0.7cm](y) {$Y$};

% LF factors
\draw (-1,0.7) node[draw,fill=lightgray,rectangle,minimum size=0.3cm](l1) {};
\draw (0,0.7) node[draw,fill=lightgray,rectangle,minimum size=0.3cm](l2) {};
\draw (1,0.7) node[draw,fill=lightgray,rectangle,minimum size=0.3cm](l3) {};
\draw (y) -- (l1);
\draw (y) -- (l2);
\draw (y) -- (l3);

% LF observed labels
\draw (-1.5,0) node[draw,fill=lightgray,circle,minimum size=0.7cm](l1x) {$\lambda_1$};
\draw (0,0) node[draw,fill=lightgray,circle,minimum size=0.7cm](l2x) {$\lambda_2$};
\draw (1.5,0) node[draw,fill=lightgray,circle,minimum size=0.7cm](l3x) {$\lambda_3$};
\draw (l1) -- (l1x);
\draw (l2) -- (l2x);
\draw (l3) -- (l3x);

\end{tikzpicture}
\caption{The generative model of a training set defined by the user input (unary factors omitted).}
\label{ind-example-fg}
\end{subfigure}

\caption{An example
of extracting mentions of gene-disease relations from the scientific
literature.}
\label{ind-example}
\end{figure}

\begin{example}
\label{ex1}
To gain intuition about labeling functions, we describe a simple text relation extraction example.
In Figure ~\ref{ind-example}, we consider the task of classifying co-occurring gene and disease 
mentions as either expressing a causal relation or not.  For example, 
given the sentence ``Gene A causes disease B'', the object $x = (A,B)$ has true class $y = 1$.  
To construct a training set, the user writes three labeling functions 
(Figure~\ref{ind-example-lfs}). In $\lambda_1$, an external structured knowledge base is 
used to label a few objects with relatively high accuracy, and is 
equivalent to a traditional distant supervision rule (see Sec.~\ref{secRelatedWork}). 
$\lambda_2$ uses a purely heuristic approach to label a much larger number of examples 
with lower accuracy.  Finally, $\lambda_3$ is a ``hybrid'' labeling function, which leverages a knowledge base and a heuristic.
\end{example}

A labeling function need not have perfect accuracy or recall; rather, it represents a pattern that the user 
wishes to impart to their model and that is easier to encode as a labeling function than as a set of 
hand-labeled examples.  As illustrated in Ex.~\ref{ex1}, labeling functions can be based on external 
knowledge bases, libraries or ontologies, can express heuristic patterns, or some hybrid of these
types; we see evidence for the existence of such diversity in our experiments (Section~\ref{secExperiments}). The use of 
labeling functions is also strictly more general than manual annotations, as a manual annotation can 
always be directly encoded by a labeling function.  Importantly, labeling functions can
overlap, conflict, and even have dependencies which users can provide as part of the data programming 
specification (see Section~\ref{sec:dep}); our approach provides a simple framework for these inputs.

\paragraph*{Independent Labeling Functions}
We first describe a model in which the
labeling functions label independently, given the true label class.
Under this model, each labeling function $\lambda_i$ has some probability
$\beta_i$ of labeling an object and then some probability $\alpha_i$ of
labeling the object correctly; for simplicity we also assume here that
each class has probability 0.5. This
model has distribution
\begin{equation}
  \label{eqnSimpleGenerativeModel}
  \mu_{\alpha, \beta}(\Lambda, Y)
  =
  \frac{1}{2}
  \prod_{i=1}^m
  \left(
    \beta_i \alpha_i \mathbf{1}_{\{ \Lambda_i = Y \}} % \frac{\Lambda_i^2 (1 + \Lambda_i Y)}{2}
    +
    \beta_i (1 - \alpha_i) \mathbf{1}_{\{ \Lambda_i = -Y \}} %  \frac{\Lambda_i^2 (1 - \Lambda_i Y)}{2}
    +
    (1 - \beta_i) \mathbf{1}_{\{ \Lambda_i = 0 \}} % (1 - \Lambda_i^2)
  \right),
\end{equation}
where $\Lambda \in \{-1,0,1\}^m$ contains the labels output by the labeling
functions, and $Y \in \{-1,1\}$ is the predicted class.  If we allow the
parameters $\alpha \in \R^m$ and $\beta \in \R^m$ to vary,
(\ref{eqnSimpleGenerativeModel}) specifies a
family of generative models.
In order to expose the scaling of the expected loss as the size of
the unlabeled dataset changes, we will assume here that
$0.3 \le \beta_i \le 0.5$ and $0.8 \le \alpha_i \le 0.9$.
We note that while these arbitrary constraints can be changed, they are roughly
consistent with our applied experience, where users tend to write high-accuracy
and high-coverage labeling functions.

Our first goal will be to learn which parameters $(\alpha, \beta)$ are
most consistent with our observations---our unlabeled training set---using
maximum likelihood estimation.
To do this for a
particular training set $S \subset \mathcal{X}$, we will solve the problem
\begin{equation}
  \label{eqnParameterLearning}
  (\hat \alpha, \hat \beta)
  =
  \arg \max_{\alpha, \beta}
  \sum_{x \in S}
  \log \Prob[(\Lambda, Y) \sim \mu_{\alpha, \beta}]{
    \Lambda = \lambda(x)
  }
  =
  \arg \max_{\alpha, \beta}
  \sum_{x \in S}
  \log \left(
    \sum_{y'\in\{-1,1\}}
    \mu_{\alpha, \beta}(\lambda(x),y')
    \right)
\end{equation}
In other words, we are maximizing the probability that the observed
labels produced on our training examples occur under the generative
model in (\ref{eqnSimpleGenerativeModel}).  In our experiments, we use
stochastic gradient descent to solve this problem; since this is a
standard technique, we defer its analysis to the appendix.

\paragraph*{Noise-Aware Empirical Loss}
Given that our parameter learning phase has successfully found some
$\hat \alpha$ and $\hat \beta$ that
accurately describe the training set,
we can now proceed to estimate the parameter $w$ which
minimizes the
expected risk of a linear model over our feature mapping $f$, given $\hat \alpha,\hat \beta$.
To do so, we define the \textit{noise-aware empirical risk}
$L_{\hat \alpha, \hat \beta}$ with regularization parameter $\rho$,
and compute the \textit{noise-aware empirical risk minimizer}
\begin{equation}
  \label{eqnClassifierLearning}
  \hat w
  =
  \arg \min_{w} L_{\hat \alpha, \hat \beta}(w; S)
  = 
  \arg \min_{w}
  \frac{1}{\Abs{S}} \sum_{x \in S}
  \Exvc[(\Lambda, Y) \sim \mu_{\hat \alpha, \hat \beta}]{
    \log\left(1 + e^{-w^T f(x) Y}\right)
  }{\Lambda = \lambda(x)}
  +
  \rho \norm{w}^2
\end{equation}
This is a logistic
regression problem, so it can be solved using stochastic gradient
descent as well.

We can in fact prove that
stochastic gradient descent
running on (\ref{eqnParameterLearning}) and
(\ref{eqnClassifierLearning}) is guaranteed to produce accurate
estimates, under conditions which we describe now. First, the problem distribution
$\pi$ needs to be accurately modeled by some distribution $\mu$ in the
family that we are trying to learn.  That is, for some $\alpha^*$ and $\beta^*$,
\begin{equation}
  \label{eqnDPA1}
  \forall \Lambda \in \{-1,0,1\}^m, Y \in \{-1,1\}, \,
  \Prob[(x,y) \sim \pi^*]{\lambda(x) = \Lambda, \, y = Y}
  =
  \mu_{\alpha^*, \beta^*}(\Lambda, Y).
\end{equation}
Second, given an example $(x, y) \sim \pi^*$,
the class label $y$ must be independent of the features $f(x)$ given the labels
$\lambda(x)$.  That is,
\begin{equation}
  \label{eqnDPA2}
  (x, y) \sim \pi^*
  \Rightarrow
  y \perp f(x)\ |\ \lambda(x).
\end{equation}
This assumption encodes the idea that the labeling functions, while they
may be arbitrarily dependent on the features, provide
sufficient information to accurately identify the class.
Third, we assume that the
algorithm used to solve (\ref{eqnClassifierLearning}) has bounded
generalization risk such that for some parameter $\chi$,
\begin{equation}
  \label{eqnDPA3}
  \Exv[\hat w]{
    \Exv[S]{L_{\hat \alpha, \hat \beta}(\hat w; S)}
    - 
    \min_w
    \Exv[S]{L_{\hat \alpha, \hat \beta}(w; S)}
  } \le \chi.
\end{equation}
Under these conditions,
we make the following statement about the accuracy of our estimates,
which is a simplified version of a theorem that is detailed in the appendix.

\begin{theorem}
  \label{stmtIndepDataProgramming}
  Suppose that we run data programming, solving the problems in
  (\ref{eqnParameterLearning}) and (\ref{eqnClassifierLearning}) using
  stochastic gradient descent
  to produce $(\hat \alpha, \hat \beta)$ and $\hat w$.  Suppose further that
  our setup satisfies the conditions (\ref{eqnDPA1}),
  (\ref{eqnDPA2}), and (\ref{eqnDPA3}), and suppose that $m \ge 2000$. Then
  for any $\epsilon > 0$, if the number of labeling functions $m$ and
  the size of the input dataset $S$ are large enough that
    $$\Abs{S} \ge
    \frac{356}{\epsilon^2}
    \log\left(
      \frac{m}{3 \epsilon}
    \right)$$
  then our expected parameter error and generalization risk can be
  bounded by 
  \begin{align*}
    \Exv{\norm{\hat \alpha - \alpha^*}^2} &\le m \epsilon^2
    &
    \Exv{\norm{\hat \beta - \beta^*}^2} &\le m \epsilon^2
    &
    \Exv{l(\hat w) - \min_w l(w)} &\le \chi + \frac{\epsilon}{27 \rho}.
  \end{align*}
\end{theorem}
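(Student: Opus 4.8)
The plan is to decompose the statement into two essentially independent parts that mirror the two optimization problems (\ref{eqnParameterLearning}) and (\ref{eqnClassifierLearning}): first a \emph{parameter-recovery} bound showing that the maximum-likelihood estimates $(\hat\alpha,\hat\beta)$ concentrate around $(\alpha^*,\beta^*)$, and second a \emph{risk-transfer} bound showing that accurate parameters together with assumption (\ref{eqnDPA2}) let the noise-aware minimizer $\hat w$ compete with $\min_w l(w)$ on the true distribution. The two parts communicate only through the quantities $\norm{\hat\alpha-\alpha^*}$ and $\norm{\hat\beta-\beta^*}$.

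For the parameter-recovery part I would work with the population log-likelihood $\ell(\alpha,\beta) = \Exv[x]{\log \Prob[(\Lambda,Y)\sim\mu_{\alpha,\beta}]{\Lambda = \lambda(x)}}$, which by (\ref{eqnDPA1}) and Gibbs' inequality is maximized at $(\alpha^*,\beta^*)$. The key structural observation is that the marginal $\Prob[(\Lambda,Y)\sim\mu_{\alpha,\beta}]{\Lambda=\lambda(x)}$ is a two-component mixture over $Y\in\{-1,1\}$, and the hypothesis $m \ge 2000$ together with $0.8\le\alpha_i\le0.9$ forces the posterior on $Y$ given $\Lambda$ to be exponentially concentrated: the log-odds of $Y$ is a sum of $m$ independent bounded terms each biased away from zero, so a Hoeffding/Chernoff estimate shows it is extreme with overwhelming probability. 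This effectively means the latent label is recoverable from $\Lambda$, which both removes the non-concavity that the mixture would otherwise introduce and approximately decouples the estimation into $m$ independent per-labeler problems. I would then lower-bound the curvature (Fisher information) of $\ell$ coordinate-wise — the constraints $0.3\le\beta_i\le0.5$ and $0.8\le\alpha_i\le0.9$ keep every probability bounded away from $0$ and $1$ and hence furnish an explicit positive lower bound — and combine this with a concentration bound on the empirical gradient, whose summands are bounded because $\Lambda\in\{-1,0,1\}^m$ and the parameters live in a compact box. Strong concavity turns gradient concentration into $\Exv{(\hat\alpha_i-\alpha_i^*)^2}\le\epsilon^2$ per coordinate, and summing the $m$ coordinates yields $\Exv{\norm{\hat\alpha-\alpha^*}^2}\le m\epsilon^2$ and likewise for $\beta$. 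The requirement $\Abs{S}\ge \frac{356}{\epsilon^2}\log(\frac{m}{3\epsilon})$ is exactly what a union bound over the $m$ coordinates (the $\log(m/\cdot)$) plus the explicit curvature and variance constants (the $356$) demand, and the SGD optimization error is folded in by citing the standard rate for strongly convex objectives that the text defers to the appendix.

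For the risk-transfer part I would first show that with the \emph{true} parameters the population noise-aware risk coincides with the regularized true risk: using (\ref{eqnDPA1}) to identify the model posterior $\Prob[(\Lambda,Y)\sim\mu_{\alpha^*,\beta^*}]{Y \mid \Lambda}$ with the true posterior, and then (\ref{eqnDPA2}) to replace conditioning on $\lambda(x)$ by the full conditioning, the tower property gives $\Exv[S]{L_{\alpha^*,\beta^*}(w;S)} = l(w)+\rho\norm{w}^2$. Hence the population regularized problem is precisely the minimization of $l(w)+\rho\norm{w}^2$, and the $2\rho$-strong convexity contributed by the regularizer is what lets a perturbation of the objective be converted into an excess-risk bound. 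There are then three errors to control and add: the generalization/optimization error $\chi$ supplied directly by (\ref{eqnDPA3}); the bias from using $(\hat\alpha,\hat\beta)$ instead of $(\alpha^*,\beta^*)$, which I would bound by showing the noise-aware loss is Lipschitz in the parameters (again because the relevant conditional probabilities have bounded derivatives on the compact parameter box) and then invoking the Part-A bounds on $\norm{\hat\alpha-\alpha^*}$ and $\norm{\hat\beta-\beta^*}$; and the regularization bias, handled because $\norm{f(x)}\le 1$ bounds the optimal weights. Propagating the per-coordinate parameter error $\epsilon$ through strong convexity — a gradient perturbation of order $\epsilon$ displaces the minimizer by order $\epsilon/\rho$ — produces the $\frac{\epsilon}{27\rho}$ term, and summing the three contributions yields $\Exv{l(\hat w)-\min_w l(w)}\le \chi+\frac{\epsilon}{27\rho}$.

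The main obstacle I expect is the parameter-recovery step, and within it making the large-$m$ posterior-concentration argument fully quantitative. Turning the statement ``the posterior on $Y$ is nearly deterministic'' into an honest curvature lower bound for the a~priori non-concave mixture log-likelihood, with constants sharp enough to produce the stated $356$, the $m\epsilon^2$ scaling, and the threshold $m\ge 2000$, is where the real effort lies; the risk-transfer step is comparatively routine once the Lipschitz-in-parameters estimate and the strong-convexity bookkeeping are in place.
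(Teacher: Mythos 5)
Your proposal follows, in outline, the same route as the paper's appendix proof (Theorem A.1 specialized via its independent-model corollary): the same two-part decomposition, the same key mechanism of exponential posterior concentration of $Y$ given $\Lambda$ for large $m$ killing the non-concavity of the mixture likelihood, a Fisher-information lower bound from the compact parameter box giving strong concavity, SGD on the strongly concave objective for the parameter bound, and a transfer step using (\ref{eqnDPA1}), (\ref{eqnDPA2}), (\ref{eqnDPA3}). (One minor bookkeeping difference: in the paper the $\log(m/(3\epsilon))$ term comes not from a union bound over coordinates but from the SGD linear-convergence term $\log(2\norm{\theta_0-\theta^*}^2/\epsilon)$, with $\norm{\theta_0-\theta^*}^2$ scaling linearly in $m$.)

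However, there is a genuine gap in your risk-transfer step. You propose to bound the bias from using $(\hat\alpha,\hat\beta)$ in place of $(\alpha^*,\beta^*)$ by a Lipschitz-in-parameters estimate justified ``because the relevant conditional probabilities have bounded derivatives on the compact parameter box.'' Mere boundedness of these derivatives yields a Lipschitz constant of order $\norm{w}=O(1/\rho)$ with respect to the $\ell_2$ norm on the full parameter vector, while Part A only gives total parameter error $\sqrt{\Exv{\norm{\hat\alpha-\alpha^*}^2}}\le\sqrt{m}\,\epsilon$; multiplying these produces a bias of order $\sqrt{m}\,\epsilon/\rho$, which grows with $m$ and cannot yield the $m$-independent bound $\chi+\epsilon/(27\rho)$ claimed by the theorem. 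The missing ingredient is that the sensitivity of the noise-aware loss to the parameters is not merely bounded but tiny: differentiating the conditional mean $\Exvc[(\Lambda,Y)\sim\mu_{\theta}]{Y}{\Lambda=\lambda(x)}$ with respect to the natural parameters produces a factor of the conditional variance $\Varc[(\Lambda,Y)\sim\mu_{\theta}]{Y}{\Lambda=\lambda(x)}$, which is exponentially small in $m$ \emph{uniformly over all feasible parameters} --- by exactly the posterior-concentration argument you deploy in Part A. This is how the paper proceeds (mean value theorem plus Cauchy--Schwarz in its expected-loss lemma): the resulting Lipschitz constant is $O\bigl(c/(\rho\sqrt{M})\bigr)$, the $\sqrt{M}$ cancels against the $\sqrt{M}\epsilon$ parameter error, and the bound $\chi+c\epsilon/(2\rho)$ follows with $c=\beta_{\min}/4$. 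So the posterior concentration must be used twice --- once for strong concavity and once for the loss-transfer Lipschitz constant --- and your Part B is not ``comparatively routine'' without it.
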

We select $m \geq 2000$ to simplify the
statement of the theorem and give the reader a feel for how
$\epsilon$ scales with respect to $\Abs{S}$. The full theorem with scaling
in each parameter (and for arbitrary $m$) is presented in the appendix.
This result establishes that to achieve both expected loss and
parameter estimate error $\epsilon$, it suffices
to have only $m = O(1)$ labeling functions and $\Abs{S} = \tilde
O(\epsilon^{-2})$ training examples, which is the same asymptotic
scaling exhibited by methods that use labeled data.  This means that
data programming achieves the same learning rate as methods that use
labeled data, while requiring asymptotically less work from its users,
who need to specify $O(1)$ labeling functions rather than manually
label $\tilde O(\epsilon^{-2})$ examples. In contrast, in the
crowdsourcing setting~\cite{karger2011iterative}, the number of 
workers $m$ tends to
infinity while here it is constant while the dataset
grows. These results provide some
explanation of why our experimental results suggest that a small number
of rules with a large unlabeled training set can be effective at even
complex natural language processing tasks.

%%% SECTION 4: Dependencies
\section{Handling Dependencies}
\label{sec:dep}

\begin{figure}
\centering
\begin{subfigure}{.4\textwidth}
\begin{tikzpicture}[every node/.style={inner sep=0,outer sep=0}]

% True class Y
\draw (0,1.6) node[draw,circle,fill=white,minimum size=0.7cm](y) {$Y$};

% LF factors
\draw (-0.8,0.8) node[draw,fill=lightgray,rectangle,minimum size=0.3cm](l1) {};
\draw (0.8,0.8) node[draw,fill=lightgray,rectangle,minimum size=0.3cm](l2) {};
\draw (y) -- (l1);
\draw (y) -- (l2);

% LF observed labels
\draw (-1.2,0) node[draw,fill=lightgray,circle,minimum size=0.7cm](l1x) {$\lambda_1$};
\draw (1.2,0) node[draw,fill=lightgray,circle,minimum size=0.7cm](l2x) {$\lambda_2$};
\draw (l1) -- (l1x);
\draw (l2) -- (l2x);

% Dependency factors
\draw (0,0) node[draw,fill=lightgray,rectangle,minimum size=0.5cm](sx) {$s$};
\draw (l1x) -- (sx);
\draw (l2x) -- (sx);

\end{tikzpicture}
\begin{lstlisting}
lambda_1(x) = f(x.word)
lambda_2(x) = f(x.lemma)

Similar(lambda_1, lambda_2)
\end{lstlisting}
\label{dep-ex-1}
\end{subfigure}%
\begin{subfigure}{.4\textwidth}
\begin{tikzpicture}[every node/.style={inner sep=0,outer sep=0}]

% True class Y
\draw (0,1.2) node[draw,circle,fill=white,minimum size=0.7cm](y) {$Y$};

% LF observed labels
\draw (-1.5,-0.3) node[draw,fill=lightgray,circle,minimum size=0.7cm](l2x) {$\lambda_2$};
\draw (0,-0.15) node[draw,fill=lightgray,circle,minimum size=0.7cm](l1x) {$\lambda_1$};
\draw (1.5,-0.3) node[draw,fill=lightgray,circle,minimum size=0.7cm](l3x) {$\lambda_3$};

% LF factors
\draw (0,0.6) node[draw,fill=lightgray,rectangle,minimum size=0.3cm](l1) {};
\draw (y) -- (l1);
\draw (l1x) -- (l1);

% Dependency factors
\draw (-0.9,0.6) node[draw,fill=lightgray,rectangle,minimum size=0.5cm](f12) {$f$};
\draw (0.9,0.6) node[draw,fill=lightgray,rectangle,minimum size=0.5cm](r13) {$r$};
\draw (f12) -- (l2x);
\draw (f12) -- (l1x);
\draw (f12) -- (y);
\draw (r13) -- (l3x);
\draw (r13) -- (l1x);
\draw (r13) -- (y);

\end{tikzpicture}
\begin{lstlisting}
lambda_1(x) = f('.*cause.*')
lambda_2(x) = f('.*not cause.*')
lambda_3(x) = f('.*cause.*')

Fixes(lambda_1, lambda_2)
Reinforces(lambda_1, lambda_3)
\end{lstlisting}
\label{dep-ex-2}
\end{subfigure}%
\begin{subfigure}{.2\textwidth}
\begin{tikzpicture}[every node/.style={inner sep=0,outer sep=0}]

% True class Y
\draw (0,1.6) node[draw,circle,fill=white,minimum size=0.7cm](y) {$Y$};

% LF factors
\draw (-0.8,0.8) node[draw,fill=lightgray,rectangle,minimum size=0.3cm](l1) {};
\draw (0.8,0.8) node[draw,fill=lightgray,rectangle,minimum size=0.3cm](l2) {};
\draw (y) -- (l1);
\draw (y) -- (l2);

% LF observed labels
\draw (-1.2,0) node[draw,fill=lightgray,circle,minimum size=0.7cm](l1x) {$\lambda_1$};
\draw (1.2,0) node[draw,fill=lightgray,circle,minimum size=0.7cm](l2x) {$\lambda_2$};
\draw (l1) -- (l1x);
\draw (l2) -- (l2x);

% Dependency factors
\draw (0,0) node[draw,fill=lightgray,rectangle,minimum size=0.5cm](ex) {$e$};
\draw (l1x) -- (ex);
\draw (l2x) -- (ex);

\end{tikzpicture}
\begin{lstlisting}
lambda_1(x) = x in DISEASES_A
lambda_2(x) = x in DISEASES_B

Excludes(lambda_1, lambda_2)
\end{lstlisting}
\label{dep-ex-3}
\end{subfigure}

\caption{Examples of labeling function dependency predicates.}
\label{dep-example}
\end{figure}
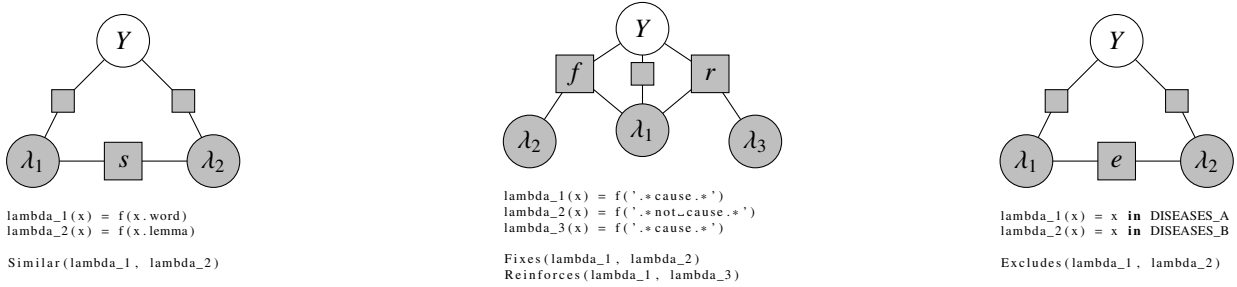

In our experience with data programming, we have found that users
often write labeling functions that have clear dependencies among
them. As more labeling functions are added as the system is developed,
an implicit dependency structure arises naturally amongst the labeling
functions: modeling these dependencies can in some cases improve
accuracy. We describe a method by which the user can specify this dependency
knowledge as a \emph{dependency graph}, and show how the system can
use it to produce better parameter estimates.

\paragraph*{Label Function Dependency Graph}
To support the injection of dependency information into the model, we
augment the data programming specification with 
a \textit{label function dependency graph},
$G \subset \mathcal{D} \times \{1, \ldots, m\} \times \{1, \ldots, m\}$,
which is a directed graph over the labeling functions, each of the edges of
which is associated with a \emph{dependency type} from a class of dependencies
$\mathcal{D}$ appropriate to the domain.  
From our experience with practitioners, we identified four commonly-occurring
types of dependencies as illustrative examples:
\emph{similar}, \emph{fixing}, \emph{reinforcing}, and
\emph{exclusive} (see Figure \ref{dep-example}).

For example, suppose that we have two functions $\lambda_1$ and $\lambda_2$,
and $\lambda_2$
typically labels only when (i) $\lambda_1$ also labels, (ii) $\lambda_1$
and $\lambda_2$ disagree in their labeling, and (iii) $\lambda_2$ is actually
correct.  We call this a fixing dependency, since $\lambda_2$
\emph{fixes} mistakes made by $\lambda_1$.  If 
$\lambda_1$ and $\lambda_2$ were to typically agree rather than disagree,
this would be a reinforcing dependency, since $\lambda_2$ reinforces
a subset of the labels of $\lambda_1$.

\paragraph*{Modeling Dependencies}
The presence of dependency information means that we can no longer model our
labels using the simple Bayesian network in (\ref{eqnSimpleGenerativeModel}).
Instead, we model our distribution as a factor graph.
% ---a natural representation to use when one already has a graph of dependencies!
This standard technique lets us describe the family of generative distributions
in terms of a known \emph{factor function}
$h: \{-1,0,1\}^m \times \{-1,1\} \mapsto \{-1,0,1\}^M$ (in which each
entry $h_i$ represents a factor), 
and an unknown parameter $\theta \in \R^M$ as
\[
  \mu_{\theta}(\Lambda, Y)
  =
  Z_{\theta}^{-1} \exp(
    \theta^T h(\Lambda, Y)
  ),
\]
where $Z_{\theta}$ is the \emph{partition function} which ensures that $\mu$
is a distribution.  Next, we will describe how we define $h$ using
information from the dependency graph.

To construct $h$, we will start with some base factors, which we
inherit from (\ref{eqnSimpleGenerativeModel}),
and then augment them with additional factors representing dependencies.
For all $i \in \{1, \ldots, m\}$, we let
\[
  h_0(\Lambda, Y) = Y,
  \quad
  h_i(\Lambda, Y) = \Lambda_i Y,
  \quad
  h_{m+i}(\Lambda, Y) = \Lambda_i,
  \quad
  h_{2m+i}(\Lambda, Y) = \Lambda_i^2 Y,
  \quad
  h_{3m+i}(\Lambda, Y) = \Lambda_i^2.
\]
These factors alone are sufficient to describe any distribution for which
the labels are mutually independent, given the class: this includes the
independent family in (\ref{eqnSimpleGenerativeModel}).

We now proceed by adding additional factors to $h$, which model the 
dependencies encoded in $G$.  For each dependency edge $(d, i, j)$, we 
add one or more factors to $h$ as follows.
For a near-duplicate dependency on $(i,j)$, we add a single factor
$h_{\iota}(\Lambda, Y) = \mathbf{1}\{ \Lambda_i = \Lambda_j \}$,
which increases our
prior probability that the labels will agree.  For a fixing dependency, we 
add two factors,
$h_{\iota}(\Lambda, Y) = -\mathbf{1}\{ \Lambda_i = 0 \wedge \Lambda_j \ne 0 \}$
and
$h_{\iota+1}(\Lambda, Y) = \mathbf{1}\{ \Lambda_i = -Y \wedge \Lambda_j = Y \}$,
which encode the idea that $\lambda_j$ labels only when $\lambda_i$ does,
and that $\lambda_j$ fixes errors made by $\lambda_i$.  The factors
for a reinforcing dependency are the same, except that
$h_{\iota+1}(\Lambda, Y) = \mathbf{1}\{ \Lambda_i = Y \wedge \Lambda_j = Y \}$.
Finally, for an exclusive dependency, we have a single factor
$h_{\iota}(\Lambda, Y) = -\mathbf{1}\{ \Lambda_i \ne 0 \wedge \Lambda_j \ne 0 \}$.

\paragraph{Learning with Dependencies}
We can again solve a maximum likelihood problem like
(\ref{eqnParameterLearning}) to learn the parameter $\hat \theta$.
Using the results, we can
continue on to find the noise-aware empirical loss minimizer by
solving the problem in (\ref{eqnClassifierLearning}).  In order to
solve these problems in the dependent case, we typically invoke
stochastic gradient descent, using Gibbs sampling to sample from the
distributions used in the gradient update. Under conditions similar to
those in Section \ref{secDataProgramming}, we can again provide a bound on
the accuracy of these results.  We define these conditions now.  First,
there must be some set $\Theta \subset \R^M$ that we know our 
parameter lies in.  This is analogous to the assumptions on $\alpha_i$ and
$\beta_i$ we made in Section \ref{secDataProgramming}, and we can state
the following analogue of (\ref{eqnDPA1}):
\begin{equation}
  \label{eqnDepDPA1}
  \exists \theta^* \in \Theta \text{ s.t. }
  \forall (\Lambda, Y) \in \{-1,0,1\}^m \times \{-1,1\}, \:
  \Prob[(x,y) \sim \pi^*]{\lambda(x) = \Lambda, \, y = Y}
  =
  \mu_{\theta^*}(\Lambda, Y).
\end{equation}
Second, for any $\theta \in \Theta$, it must be possible to accurately
learn $\theta$ from full (i.e. labeled) samples of $\mu_{\theta}$.
More specifically, there exists an unbiased estimator $\hat \theta(T)$
that is a function of some dataset $T$ of independent samples from
$\mu_{\theta}$ 
such that, for some $c > 0$ and for all $\theta \in \Theta$,
\begin{equation}
  \label{eqnDepDPA2}
  \Cov{\hat \theta(T)} \preceq (2 c \Abs{T})^{-1} I.
\end{equation}
Third, for any two feasible models $\theta_1$ and $\theta_2 \in \Theta$,
\begin{equation}
  \label{eqnDepDPA3}
  \Exv[(\Lambda_1, Y_1) \sim \mu_{\theta_1}]{
    \Varc[(\Lambda_2, Y_2) \sim \mu_{\theta_2}]{Y_2}{\Lambda_1 = \Lambda_2}
  }
  \le
  c M^{-1}.
\end{equation}
That is, we'll usually be reasonably sure in our guess for the value of
$Y$, even if we guess using distribution $\mu_{\theta_2}$ while the
the labeling functions were actually sampled from (the possibly totally
different) $\mu_{\theta_1}$.
We can now prove the following result about the accuracy
of our estimates.

\begin{theorem}
  \label{thmDependentDataProgramming}
  Suppose that we run stochastic gradient descent to produce $\hat
  \theta$ and $\hat w$, and that our setup satisfies the conditions
  (\ref{eqnDPA2})-(\ref{eqnDepDPA3}). Then for any $\epsilon > 0$, if the
  input dataset $S$ is large enough that
  \begin{align*}
    |S|
    &\ge
    \frac{2}{c^2 \epsilon^2}
    \log\left(
      \frac{2 \norm{\theta_0 - \theta^*}^2}{\epsilon}
    \right),
  \end{align*}
  then our expected parameter error and generalization risk can be
  bounded by 
  \begin{align*}
    \Exv{\norm{\hat \theta - \theta^*}^2} &\le M \epsilon^2
    &
    \Exv{l(\hat w) - \min_w l(w)} &\le \chi + \frac{c \epsilon}{2 \rho}.
  \end{align*}
\end{theorem}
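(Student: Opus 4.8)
The plan is to split the argument into two essentially independent pieces, mirroring the structure of Theorem~\ref{stmtIndepDataProgramming}: (i) a parameter-recovery bound establishing $\Exv{\norm{\hat\theta-\theta^*}^2}\le M\epsilon^2$ for the marginal maximum-likelihood estimator of (\ref{eqnParameterLearning}), and (ii) a loss-transfer bound showing that the residual error in the noise-aware objective of (\ref{eqnClassifierLearning}) contributes only a term of order $\epsilon/\rho$ on top of the generalization term $\chi$. Once (i) is in hand, piece (ii) reduces to the same Lipschitz-propagation reasoning as in the independent case, so the real work is in (i), and specifically in matching the two structural hypotheses (\ref{eqnDepDPA2}) and (\ref{eqnDepDPA3}) to the curvature of the marginal likelihood.

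For (i), I would write the population objective of (\ref{eqnParameterLearning}) as the expected marginal log-likelihood $\bar L(\theta)=\Exv[\Lambda\sim\mu_{\theta^*}]{\log\sum_{y'}\mu_\theta(\Lambda,y')}$, whose gradient is $\Exv[\Lambda]{\Exvc[\mu_\theta]{h(\Lambda,Y)}{\Lambda}-\Exv[\mu_\theta]{h(\Lambda,Y)}}$; by (\ref{eqnDepDPA1}) this vanishes at $\theta^*$, so $\theta^*$ is the unique maximizer provided $\bar L$ is strictly concave. The crux is a \emph{uniform} strong-concavity lower bound. Differentiating twice gives, for every $\theta\in\Theta$, the complete-minus-missing information identity $-\nabla^2\bar L(\theta)=\Var[\mu_\theta]{h}-\Exv[\Lambda\sim\mu_{\theta^*}]{\Varc[\mu_\theta]{h(\Lambda,Y)}{\Lambda}}$. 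The first term is the Fisher information of the fully observed model, which for an exponential family equals the covariance of the sufficient statistics; hypothesis (\ref{eqnDepDPA2}) forces it to be large, since an unbiased estimator with covariance $\preceq(2c\Abs{T})^{-1}I$ gives, by Cram\'er--Rao at $\Abs{T}=1$, $\Var[\mu_\theta]{h}\succeq 2cI$. For the missing-information term I would exploit that $Y$ is binary: writing $b(\Lambda)=h(\Lambda,1)-h(\Lambda,-1)$ yields $\Varc[\mu_\theta]{h(\Lambda,Y)}{\Lambda}=\tfrac14\,b(\Lambda)b(\Lambda)^T\,\Varc[\mu_\theta]{Y}{\Lambda}$, and since $h\in\{-1,0,1\}^M$ we have $\norm{b(\Lambda)}^2\le 4M$, so the missing information is $\preceq M\,\Exv[\Lambda\sim\mu_{\theta^*}]{\Varc[\mu_\theta]{Y}{\Lambda}}\,I$. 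This averaged conditional variance of $Y$ is exactly the quantity that (\ref{eqnDepDPA3}) controls in its general two-model form (taking $\theta_1=\theta^*,\theta_2=\theta$), bounding it by $cM^{-1}$. Subtracting gives $-\nabla^2\bar L(\theta)\succeq(2c-c)I=cI$ throughout $\Theta$, the strong-concavity constant the sample complexity is built around.

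Given $c$-strong concavity, the stochastic gradients of (\ref{eqnParameterLearning}) — produced by Gibbs sampling, hence (assuming exact samples) unbiased with second moment $O(M)$ because $h\in\{-1,0,1\}^M$ — fit a standard strongly-convex SGD analysis. The recursion $\Exv{\norm{\theta_{t+1}-\theta^*}^2}\le(1-2\eta c)\Exv{\norm{\theta_t-\theta^*}^2}+\eta^2 G^2$ with $G^2=O(M)$ produces, after $\Abs{S}$ steps with a suitable constant step size, a geometrically decaying transient term in $\norm{\theta_0-\theta^*}^2$ (the origin of the logarithmic factor in the stated bound) plus a steady-state term of order $G^2/(c\,\Abs{S})$. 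Forcing both below $\tfrac12 M\epsilon^2$ reproduces the requirement $\Abs{S}\ge 2c^{-2}\epsilon^{-2}\log(2\norm{\theta_0-\theta^*}^2/\epsilon)$; the key cancellation is that $G^2/M=O(1)$, which is what removes the dimension $M$ from the rate and leaves the $c^{-2}\epsilon^{-2}$ scaling, yielding $\Exv{\norm{\hat\theta-\theta^*}^2}\le M\epsilon^2$.

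For (ii), I would first note that by (\ref{eqnDPA2}) and (\ref{eqnDepDPA1}) the regularized true risk equals the expected noise-aware risk evaluated at $\theta^*$: conditioning on $\lambda(x)$ and using $y\perp f(x)\mid\lambda(x)$ lets the true class-conditional be replaced by $\mu_{\theta^*}(\cdot\mid\lambda(x))$. The noise-aware loss depends on $\theta$ only through the scalar $p_\theta(Y=1\mid\Lambda)=(1+e^{-\theta^Tb(\Lambda)})^{-1}$, whose $\theta$-gradient has norm $\le\tfrac14\norm{b(\Lambda)}\le\tfrac12\sqrt M$; moreover the identity $\log(1+e^{-w^Tf y})$ satisfies $\ell(w,1)-\ell(w,-1)=-w^Tf(x)$, so the gap between the $\hat\theta$- and $\theta^*$-objectives is controlled uniformly by $\max_\Lambda\Abs{p_{\hat\theta}-p_{\theta^*}}\cdot\norm{w}$, with $\norm{\hat w}$ itself bounded via the $2\rho$-strong convexity of the regularized $w$-objective. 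Combining this objective-perturbation bound with the generalization hypothesis (\ref{eqnDPA3}) contributing $\chi$, taking expectations, and inserting the part-(i) estimate $\Exv{\norm{\hat\theta-\theta^*}}\le\sqrt M\epsilon$ gives $\Exv{l(\hat w)-\min_w l(w)}\le\chi+O(\epsilon/\rho)$; careful bookkeeping of the constants reproduces the stated $\chi+\frac{c\epsilon}{2\rho}$. The main obstacle throughout is the strong-concavity lower bound in (i): correctly matching the complete-minus-missing information decomposition to the two hypotheses, where the factor-graph structure and the binary nature of $Y$ must both be used, after which the SGD convergence and the loss-transfer steps are routine.
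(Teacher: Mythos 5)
Your piece (i) is essentially the paper's own argument: the complete-minus-missing information identity for the Hessian of the marginal log-likelihood, Cram\'er--Rao applied to (\ref{eqnDepDPA2}) to get $\mathcal{I}(\theta)\succeq 2cI$, the binary-$Y$ reduction $\Covc[(\Lambda,Y)\sim\mu_\theta]{h(\Lambda,Y)}{\Lambda}= h_1(\Lambda)h_1(\Lambda)^T\Varc[(\Lambda,Y)\sim\mu_\theta]{Y}{\Lambda}\preceq MI\,\Varc[(\Lambda,Y)\sim\mu_\theta]{Y}{\Lambda}$ bounded via (\ref{eqnDepDPA3}), giving $c$-strong concavity, followed by the standard strongly-convex SGD recursion with gradient second moment $O(M)$ so that the dimension cancels. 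This matches the paper's chain of lemmas (MLE derivatives, Hessian bound, strong concavity, SGD convergence) step for step, including the resulting step size and sample complexity.

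The genuine gap is in piece (ii). You control the discrepancy between the noise-aware objectives at $\hat\theta$ and $\theta^*$ by $\max_\Lambda\Abs{p_{\hat\theta}-p_{\theta^*}}\cdot\norm{w}$, bounding the posterior difference with the \emph{uniform} gradient bound $\norm{\nabla_\theta p_\theta}\le\tfrac14\norm{b(\Lambda)}\le\tfrac12\sqrt{M}$, i.e.\ you discard the factor $p_\theta(1-p_\theta)=\tfrac14\Varc[(\Lambda,Y)\sim\mu_\theta]{Y}{\Lambda}$. With that bound, inserting $\Exv{\norm{\hat\theta-\theta^*}}\le\sqrt{M}\epsilon$ and $\norm{w}\le(2\rho)^{-1}$ yields a risk excess of order $M\epsilon/\rho$, not the claimed $c\epsilon/(2\rho)$; since generally $c\ll M$ (in the independent model $c=\beta_{\min}/4$ while $M=2m$), no bookkeeping of constants closes this factor of $M/c$. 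The missing idea---and the reason hypothesis (\ref{eqnDepDPA3}) never appears in your piece (ii)---is to keep the variance factor: one has
\[
  \nabla_\theta \Exvc[(\Lambda,Y)\sim\mu_\theta]{Y}{\Lambda=\lambda}
  =
  h_1(\lambda)\,\Varc[(\Lambda,Y)\sim\mu_\theta]{Y}{\Lambda=\lambda},
  \qquad
  h_1 = \tfrac12\bigl(h(\cdot,1)-h(\cdot,-1)\bigr),
\]
so by the mean value theorem the objective gap involves this variance evaluated at some $\psi$ on the segment between $\hat\theta$ and $\theta^*$ (hence $\psi\in\Theta$, using convexity of $\Theta$), \emph{inside} the expectation over $\bar x\sim\pi^*$, i.e.\ over $\Lambda\sim\mu_{\theta^*}$. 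At that point (\ref{eqnDepDPA3}) with $\theta_1=\theta^*$, $\theta_2=\psi$ bounds the expected conditional variance by $c/M$, turning the per-point bound into $c\norm{\hat\theta-\theta^*}\norm{w}/(2\sqrt{M})$, and the $\sqrt{M}$ factors cancel against the parameter error to give exactly $\chi+\frac{c\epsilon}{2\rho}$. Note this forces you to work with the expectation over $\Lambda$ rather than the maximum: (\ref{eqnDepDPA3}) only controls an average, so your uniform-in-$\Lambda$ formulation is structurally incompatible with the hypothesis you need to invoke.
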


As in the independent case, this shows that we need only $\Abs{S} =
\tilde O(\epsilon^{-2})$ unlabeled training examples to achieve error
$O(\epsilon)$, which is the same asymptotic scaling as
supervised learning methods. This suggests that while we pay a
computational penalty for richer dependency structures, we are no less
statistically efficient.  In the appendix, we provide more details,
including an explicit description of the algorithm and the step size used
to achieve this result.

%%% Section 5: Experiments
\section{Experiments}
\label{secExperiments}

\begin{table}
\center
\small
\begin{tabular}{ | l | l || c | c | c || c | c | c || c | c | c | }
\hline
 & & \multicolumn{3}{|c||}{KBP (News)} & \multicolumn{3}{|c||}{Genomics} & \multicolumn{3}{|c|}{Pharmacogenomics}\\
\hline
Features & Method & Prec. & Rec. & F1 & Prec. & Rec. & F1 & Prec. & Rec. & F1\\
\hline
\multirow{3}{*}{Hand-tuned} & ITR & \textbf{51.15} & 26.72 & 35.10 & 83.76 & 41.67 & 55.65 & 68.16 & 49.32 & 57.23\\
& DP & 50.52 & \textbf{29.21} & \textbf{37.02} & \textbf{83.90} & 43.43 & 57.24 & \textbf{68.36} & 54.80 & \textbf{60.83} \\
\hline
\multirow{2}{*}{LSTM} & ITR & 37.68 & 28.81 & 32.66 & 69.07 & \textbf{50.76} & 58.52 & 32.35 & 43.84 & 37.23 \\
& DP & 47.47 & 27.88 & 35.78 & 75.48 & 48.48 & \textbf{58.99} & 37.63 & 47.95 & 42.17 \\
\hline
\end{tabular}
\caption{Precision/Recall/F1 scores using data programming (DP), as compared to distant supervision ITR approach, with both hand-tuned and LSTM-generated features.}
\label{exps_table_2}
\end{table}

We seek to experimentally validate three claims about our approach. Our first claim is that
data programming can be an effective paradigm for building high quality machine 
learning systems, which we test
across three real-world relation extraction applications.
Our second claim is that data
programming can be used successfully in conjunction with automatic feature
generation methods, such as LSTM models. 
Finally, our third claim is that data programming is an intuitive
and productive framework for domain-expert users, and we report on our 
initial user studies.
%experiences with a group of bioinformatics researchers.

\paragraph*{Relation Mention Extraction Tasks}
In the \textit{relation mention extraction} task, our objects are
relation mention \textit{candidates} $x=(e_1,e_2)$, which are pairs of
entity mentions $e_1,e_2$ in unstructured text, and our goal is to
learn a model that classifies each candidate as either a true textual
assertion of the relation $R(e_1,e_2)$ or not.  We examine a news
application from the 2014 TAC-KBP Slot Filling
challenge\footnote{\small{\url{http://www.nist.gov/tac/2014/KBP/}}},
where we extract relations between real-world entities from articles~\cite{angeli2014stanford};
a clinical genomics application, where we extract causal relations
between genetic mutations and phenotypes from the scientific
literature\footnote{\small{\url{https://github.com/HazyResearch/dd-genomics}}}; and a pharmacogenomics application where we extract
interactions between genes, also from the scientific literature~\cite{mallory2015large}; further details are included in the Appendix.

For each application, we or our collaborators originally built a system
where a training set was programmatically generated 
by ordering the labeling functions as a sequence of if-then-return
statements, and for each candidate, taking the first label emitted by this 
script as the training label.  We refer to this as the
\textit{if-then-return (ITR)} approach, and note that it often
required significant
domain expert development time to tune (weeks or more).
For this set of experiments, we then used the same labeling function sets within
the framework of data programming.
For all experiments, we evaluated on a blind hand-labeled evaluation set.
In Table 1, we see that we achieve consistent improvements: on average
by 2.34 points in F1 score, including
what would have been a winning score on the 2014 TAC-KBP
challenge~\cite{surdeanu2014overview}.

We observed these performance gains across applications
with very different labeling function sets.
We describe the labeling function summary
statistics---\textit{coverage} is the percentage of objects
that had at least one label, \textit{overlap} is the percentage of
objects with more than one label, and \textit{conflict} is the
percentage of objects with conflicting labels---and see in Table 2 that
even in scenarios where $m$ is small, and conflict and overlap is relatively less common,
we still realize performance gains.
%A related practical advantage is that data programming can help users to identify overly low-quality LFs; for example, in the genomics application, users discovered two longstanding LFs had implementation bugs.
Additionally, on a disease mention extraction task (see Usability Study), which was written
from scratch within the data programming paradigm, allowing developers to
supply dependencies of the basic types outlined in Sec.~\ref{sec:dep}
led to a 2.3 point F1 score boost.

\begin{table}[h]
\centering
\small
\begin{tabular}{|l||c|c|c|c|c||c|c|}
\hline
  \multirow{2}{*}{Application} & \multirow{2}{*}{\# of LFs} & \multirow{2}{*}{Coverage} & \multirow{2}{*}{$|S_{\lambda\neq0}|$} & \multirow{2}{*}{Overlap} & \multirow{2}{*}{Conflict} & \multicolumn{2}{|c|}{F1 Score Improvement}\\
& & & & & & HT & LSTM\\
\hline 
KBP (News) & 40  & 29.39 & 2.03M & 1.38 & 0.15 & 1.92 & 3.12 \\
\hline
Genomics & 146 & 53.61 & 256K & 26.71 & 2.05 & 1.59 & 0.47 \\
\hline
Pharmacogenomics & 7 & 7.70 & 129K & 0.35 & 0.32 & 3.60 & 4.94 \\
\hline
Diseases & 12 & 53.32 & 418K & 31.81 & 0.98 & N/A & N/A \\
\hline
\end{tabular}
\caption{\small Labeling function (LF) summary statistics, sizes of generated training sets $S_{\lambda\neq0}$ (only counting non-zero labels), and relative F1 score improvement over baseline IRT methods for hand-tuned (HT) and LSTM-generated (LSTM) feature sets.}
%We include labeling function statistics from the usability study's disease mention tagging application as well, where baseline scores were not available.}
\label{tab:lf-stats}
\end{table}

%from incorporating this dependency information,
%which we believe illustrates the potential of further pursuing this approach of 
%providing dependency structure.

\paragraph*{Automatically-generated Features}
We additionally compare both hand-tuned and automatically-generated features, where the latter
are learned via an LSTM recurrent neural network (RNN)~\cite{hochreiter1997long}.
Conventional wisdom states that deep learning methods such as RNNs are prone to 
overfitting to the biases of the imperfect rules used for programmatic supervision.
In our experiments, however, we find that using data programming to denoise the labels can mitigate this issue, and we report
a 9.79 point boost to precision and a 3.12 point F1 score
improvement on the benchmark 2014 TAC-KBP (News)
task, over the baseline \textit{if-then-return} approach.  Additionally for comparison,
our approach is a 5.98 point F1 score improvement over a
state-of-the-art LSTM approach~\cite{verga2015multilingual}.
%which was trained on hand-labeled data

\paragraph*{Usability Study}
One of our hopes is that a user without expertise in ML will be more productive
iterating on labeling functions than on features.  To test this,
we arranged a hackathon involving a handful of bioinformatics
researchers, using our open-source information extraction framework Snorkel\footnote{\small{\url{snorkel.stanford.edu}}} (formerly DDLite). Their goal was to build a disease tagging system which is a common
and important challenge in the bioinformatics domain~\cite{dougan2012improved}.
The hackathon participants did
not have access to a labeled training set nor did they perform any feature
engineering. The entire effort was restricted to
iterative labeling function development and the setup of candidates
to be classified. In under eight hours, they had created a training set that led to a model which scored within 10
points of F1 of the supervised baseline; the gap was mainly due to
recall issue in the candidate extraction phase. This suggests data
programming may be a promising way to build high quality extractors, quickly.

%%% Section 6: Conclusion
\section{Conclusion and Future Work}

We introduced data programming, a new approach to generating 
large labeled training sets. We demonstrated that our approach can be used
with automatic feature generation techniques to achieve high quality
results. We also provided anecdotal
evidence that our methods may be easier for domain experts 
to use. We hope to explore the limits of our
approach on other machine learning tasks that have been held back by the lack of high-quality supervised datasets, including those in other domains such
imaging and structured prediction.

\paragraph*{Acknowledgements}
Thanks to Theodoros Rekatsinas, Manas Joglekar, Henry Ehrenberg, Jason Fries, Percy Liang, the \texttt{DeepDive} and \texttt{DDLite} users and many others for their helpful conversations. The authors acknowledge the support of:  DARPA FA8750-12-2-0335;  NSF IIS-1247701;  NSFCCF-1111943; DOE 108845; NSF CCF-1337375; DARPA FA8750-13-2-0039; NSF IIS-1353606;ONR N000141210041 and N000141310129; NIH U54EB020405; DARPA’s SIMPLEX program; Oracle; NVIDIA; Huawei; SAP Labs; Sloan Research Fellowship; Moore Foundation; American Family Insurance; Google; and Toshiba. The views and conclusions expressed in this material are those of the authors and should not be interpreted as necessarily representing the official policies or endorsements, either expressed or implied, of DARPA, AFRL, NSF, ONR, NIH, or the U.S. Government.

{\compactify
\bibliography{data_programming}{}
\bibliographystyle{abbrv}
}
\newpage

\appendix

\section{General Theoretical Results}

In this section, we will state the full form of the theoretical results we
alluded to in the body of the paper.  First, we restate, in long form, 
our setup and assumptions.

We assume that, for some function
$h: \{-1,0,1\}^m \times \{-1,1\} \mapsto \{-1,0,1\}^M$ of
\emph{sufficient statistics}, we are concerned with learning distributions,
over the set $\Omega = \{-1,0,1\}^m \times \{-1,1\}$,
of the form
\begin{equation}
  \label{eqnExpFamilyGeneral}
  \pi_{\theta}(\Lambda, Y)
  =
  \frac{1}{Z_{\theta}}
  \exp(\theta^T h(\Lambda, Y)),
\end{equation}
where $\theta \in \R^M$ is a parameter, and $Z_{\theta}$ is the partition
function that makes this a distribution.  We assume that we are given,
i.e. can derive from the data programming specification, some set $\Theta$
of \emph{feasible parameters}.  This set must have the following two 
properties.

First, for any $\theta \in \Theta$, learning the parameter $\theta$ from
(full) samples from $\pi_{\theta}$ is possible, at least in some sense.
More specifically, there exists an unbiased estimator $\hat \theta$
that is a function of some number
$D$ samples from $\pi_{\theta}$ (and is unbiased
for all $\theta \in \Theta$) such that, for all $\theta \in \Theta$
and for some $c > 0$,
\begin{equation}
  \label{eqnDataProgAssumption1}
  \Cov{\hat \theta} \preceq \frac{I}{2 c D}.
\end{equation}

Second, for any $\theta_1, \theta_2 \in \Theta$,
\begin{equation}
  \label{eqnDataProgAssumption2}
  \Exv[(\lambda_2, y_2) \sim \pi_{\theta_2}]{
    \Varc[(\lambda_1,y_1) \sim \pi_{\theta_1}]{y_1}{\lambda_1 = \lambda_2}
  }
  \le
  \frac{c}{M}.
\end{equation}
That is, we'll always be reasonably certain in our guess for the value of
$y$, even if we are totally wrong about the true parameter $\theta$.

On the other hand, we are also concerned with a distribution $\pi^*$ which
ranges over the set $\mathcal{X} \times \{-1,1\}$, and represents the
distribution of training and test examples we are using to learn.  These
objects are associated with a labeling function
$\lambda: \mathcal{X} \mapsto \{-1,0,1\}^m$
and a feature function $f: \mathcal{X} \mapsto \R^n$.
We make
three assumptions about this distribution.  First, we assume that, given
$(x, y) \sim \pi^*$,
the class label $y$ is independent of the features $f(x)$ given the labels
$\lambda(x)$.  That is,
\begin{equation}
  \label{eqnDataProgAssumption3}
  (x, y) \sim \pi^*
  \Rightarrow
  y \perp f(x)\ |\ \lambda(x).
\end{equation}
Second, we assume that we can describe the relationship between $\lambda(x)$
and $y$ in terms of our family in (\ref{eqnExpFamilyGeneral})
above.  That is, for some parameter $\theta^* \in \Theta$,
\begin{equation}
  \label{eqnDataProgAssumption4}
  \Prob[(x,y) \sim \pi^*]{\lambda(x) = \Lambda, \, y = Y}
  =
  \pi_{\theta^*}(\Lambda, Y).
\end{equation}
Third, we assume that the features themselves are bounded; for all
$x \in \mathcal{X}$,
\begin{equation}
  \label{eqnDataProgAssumption5}
  \norm{f(x)} \le 1.
\end{equation}

Our goal is twofold.  First, we want to recover some estimate $\hat \theta$
of the true parameter $\theta^*$.  Second, we want to produce a parameter
$\hat w$ that minimizes the regularized logistic loss
\[
  l(w)
  =
  \Exv[(x,y) \sim \pi^*]{
    \log(1 + \exp(-w^T f(x) y))
  }
  +
  \rho \norm{w}^2.
\]
We actually accomplish this by minimizing a noise-aware loss function,
given our recovered parameter $\hat \theta$,
\[
  l_{\hat \theta}(w)
  =
  \Exv[(\bar x, \bar y) \sim \pi^*]{
    \Exvc[(\Lambda, Y) \sim \pi_{\hat \theta}]{
      \log(1 + \exp(-w^T f(\bar x) Y))
    }{
      \Lambda = \lambda(\bar x)
    }
  }
  +
  \rho \norm{w}^2.
\]
In fact we can't even minimize this; rather, we will be minimizing the
empirical noise-aware loss function, which is only this in expectation.
Since the analysis of logistic regression is not itself interesting, we
assume that we are able to run some algorithm that produces an estimate
$\hat w$ which satisfies, for some $\chi > 0$,
\begin{equation}
  \label{eqnDataProgAssumption6}
  \Exvc{
    l_{\hat \theta}(\hat w)
    -
    \min_w l_{\hat \theta}(w)
  }{\hat \theta}
  \le
  \chi.
\end{equation}
The algorithm chosen can be anything, but in practice, we use stochastic
gradient descent.

We learn $\hat \theta$ and $\hat w$ by running the following algorithm.

\begin{algorithm}[h]
  \caption{Data Programming}
  \label{algDataProgramming}
  \begin{algorithmic}
    \REQUIRE Step size $\eta$, dataset
      $S \subset \mathcal{X}$, and initial parameter $\theta_0 \in \Theta$.
    \STATE $\theta \rightarrow \theta_0$
    \FOR{\textbf{all} $x \in S$}
      \STATE Independently sample $(\Lambda, Y)$ from $\pi_{\theta}$, and
        $(\bar \Lambda, \bar Y)$ from $\pi_{\theta}$ conditionally
        given $\Lambda = \lambda(x)$.
      \STATE $\theta \leftarrow \theta + \eta (
        h(\Lambda, Y)
        -
        h(\bar \Lambda, \bar Y)
      )$.
      \STATE $\theta = P_{\Theta}(\theta)$ \qquad
      \(\triangleright\) \textit{Here, $P_{\Theta}$ denotes orthogonal projection onto
      $\Theta$.}
    \ENDFOR
    \STATE Compute $\hat w$ using the algorithm described in
      (\ref{eqnDataProgAssumption5})
    \RETURN $(\theta, \hat w)$.
  \end{algorithmic}
\end{algorithm}

Under these assumptions, we are able to prove the following theorem
about the behavior of Algorithm \ref{algDataProgramming}.

\begin{restatable}{rtheorem}{thmDataProgramming}
  \label{thmDataProgramming}
  Suppose that we run Algorithm \ref{algDataProgramming} on a data programming
  specification that satisfies conditions
  (\ref{eqnDataProgAssumption1}),
  (\ref{eqnDataProgAssumption2}),
  (\ref{eqnDataProgAssumption3}),
  (\ref{eqnDataProgAssumption4}),
  (\ref{eqnDataProgAssumption5}), and
  (\ref{eqnDataProgAssumption6}).
  Suppose further that, for some parameter $\epsilon > 0$, we use
  step size
  \[
    \eta = \frac{c \epsilon^2}{4}
  \]
  and our dataset is of a size that satisfies
  \[
    \Abs{S}
    =
    \frac{2}{c^2 \epsilon^2}
    \log\left(
      \frac{2 \norm{\theta_0 - \theta^*}^2}{\epsilon}
    \right).
  \]
  Then, we can bound the expected parameter error with
  \[
    \Exv{\norm{\hat \theta - \theta^*}^2}
    \le
    \epsilon^2 M
  \]
  and the expected risk with
  \[
    \Exv{
      l(\hat w)
      -
      \min_w
      l(w)
    }
    \le
    \chi
    +
    \frac{
      c \epsilon
    }{
      2 \rho
    }.
  \]
\end{restatable}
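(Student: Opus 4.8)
The plan is to prove the two conclusions essentially separately: first a parameter-recovery bound $\Exv{\norm{\hat\theta - \theta^*}^2}\le M\epsilon^2$ for the stochastic iteration in Algorithm~\ref{algDataProgramming}, and then a risk transfer that feeds this bound through the noise-aware loss to obtain $\Exv{l(\hat w)-\min_w l(w)}\le \chi + \tfrac{c\epsilon}{2\rho}$. Throughout I will exploit that conditioning on $\Lambda$ leaves only the binary variable $Y$ random, which is what lets assumption~(\ref{eqnDataProgAssumption2}) control the information lost by marginalizing out $Y$.

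For the parameter bound I would first reinterpret the update as stochastic gradient descent on the population marginal negative log-likelihood $F(\theta) = -\Exv[(x,y)\sim\pi^*]{\log \sum_{Y}\pi_\theta(\lambda(x),Y)}$. A short computation gives $\nabla F(\theta)= \Exv[\pi_\theta]{h} - \Exv[x]{\Exvc[\pi_\theta]{h}{\Lambda=\lambda(x)}}$, which is exactly (up to sign) the expectation of the increment $h(\Lambda,Y)-h(\bar\Lambda,\bar Y)$, and by assumption~(\ref{eqnDataProgAssumption4}) its unique zero is $\theta^*$. The crux, and the step I expect to be the main obstacle, is showing that $F$ is strongly convex on $\Theta$: differentiating twice and using the law of total covariance yields $\nabla^2 F(\theta)=\Cov[\pi_\theta]{h} - \Exv[\Lambda\sim\pi_{\theta^*}]{\Covc[\pi_\theta]{h}{\Lambda}}$. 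I would lower-bound the first term by $2cI$ by combining assumption~(\ref{eqnDataProgAssumption1}) with the multivariate Cram\'er--Rao inequality, which forces the per-sample Fisher information $\Cov[\pi_\theta]{h}$ to satisfy $\Cov[\pi_\theta]{h}\succeq 2cI$. For the subtracted term I use that, given $\Lambda$, $h$ is a two-point random vector, so $\Covc[\pi_\theta]{h}{\Lambda}=\tfrac14\Varc[\pi_\theta]{Y}{\Lambda}\,\Delta\Delta^T$ with $\Delta = h(\Lambda,1)-h(\Lambda,-1)$; since $h\in\{-1,0,1\}^M$ we have $\norm{\Delta}^2\le 4M$, hence operator norm at most $M\Varc[\pi_\theta]{Y}{\Lambda}$, and averaging over $\Lambda\sim\pi_{\theta^*}$ with assumption~(\ref{eqnDataProgAssumption2}) (roles $\theta_1=\theta$, $\theta_2=\theta^*$) bounds it by $cI$. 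Thus $\nabla^2 F\succeq cI$: it is precisely here that the two a priori unrelated assumptions must interlock.

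Given $c$-strong convexity, I would run the standard strongly-convex SGD recursion. Non-expansiveness of $P_\Theta$ and $\theta^*\in\Theta$ give $\Exvc{\norm{\theta_{t+1}-\theta^*}^2}{\theta_t}\le (1-2\eta c)\norm{\theta_t-\theta^*}^2 + \eta^2\,\Exv{\norm{h(\Lambda,Y)-h(\bar\Lambda,\bar Y)}^2}$, where the stochastic-gradient second moment is $O(M)$ because $h\in\{-1,0,1\}^M$. Unrolling over $\Abs{S}$ steps yields $\Exv{\norm{\hat\theta-\theta^*}^2}\le e^{-2\eta c\Abs{S}}\norm{\theta_0-\theta^*}^2 + O(M\eta/c)$; substituting $\eta = c\epsilon^2/4$ makes the noise floor $\Theta(M\epsilon^2)$, and the stated lower bound on $\Abs{S}$ drives the geometrically-decaying transient below it, delivering $\Exv{\norm{\hat\theta-\theta^*}^2}\le M\epsilon^2$.

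For the risk bound, the key observation is that the noise-aware loss at the \emph{true} parameter coincides with the real risk: by assumptions~(\ref{eqnDataProgAssumption3}) and~(\ref{eqnDataProgAssumption4}), $y\perp f(x)\mid\lambda(x)$ with the correct conditional law, so $l_{\theta^*}(w)=l(w)$ for every $w$. I would then split $l(\hat w)-\min_w l(w)$ into $[l(\hat w)-l_{\hat\theta}(\hat w)] + [l_{\hat\theta}(\hat w)-\min_w l_{\hat\theta}(w)] + [\min_w l_{\hat\theta}(w)-\min_w l(w)]$; the middle bracket is at most $\chi$ by assumption~(\ref{eqnDataProgAssumption6}), and the outer two are instances of $l_{\theta^*}(w)-l_{\hat\theta}(w)$ evaluated at $\hat w$ and at the minimizer $w^*$ of $l$. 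Using the identity $\log(1+e^{-w^Tf(x)})-\log(1+e^{w^Tf(x)})=-w^Tf(x)$, this difference equals $-\tfrac12\Exv[x]{(g_{\theta^*}(\lambda(x))-g_{\hat\theta}(\lambda(x)))\,w^Tf(x)}$ with $g_\theta(\lambda)=\Exvc[\pi_\theta]{Y}{\Lambda=\lambda}$. Writing $g_{\theta^*}-g_{\hat\theta}$ as a path integral of $\nabla_\theta g_\theta = \Covc[\pi_\theta]{Y,h}{\Lambda}$, whose norm is at most $\sqrt{M}\,\Varc[\pi_\theta]{Y}{\Lambda}$ (again using binary $Y$), and invoking assumption~(\ref{eqnDataProgAssumption2}) along the path together with $\norm{f(x)}\le 1$, gives $\Exv[x]{\Abs{g_{\theta^*}-g_{\hat\theta}}}\le \tfrac{c}{\sqrt M}\norm{\hat\theta-\theta^*}$. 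Since the regularizer forces $\norm{w^*},\norm{\hat w}\le (2\rho)^{-1}$ at the minimizers, each outer bracket is at most $\tfrac{c}{4\rho\sqrt M}\norm{\hat\theta-\theta^*}$ in magnitude; taking expectations and using $\Exv{\norm{\hat\theta-\theta^*}}\le \sqrt M\,\epsilon$ from the first half bounds their sum by $\tfrac{c\epsilon}{2\rho}$, which combined with the middle bracket gives $\Exv{l(\hat w)-\min_w l(w)}\le \chi + \tfrac{c\epsilon}{2\rho}$.
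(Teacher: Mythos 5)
Your proposal is correct and follows essentially the same route as the paper's own proof: strong convexity (concavity) of the marginal likelihood objective via the Cram\'er--Rao bound on $\Cov{h}$ plus the conditional-variance assumption, a standard strongly-convex SGD recursion for the parameter bound, and the same three-term risk decomposition with the loss-difference bound $\Abs{l(w)-l_{\hat\theta}(w)}\le \frac{c}{4\rho\sqrt{M}}\norm{\hat\theta-\theta^*}$ obtained from the derivative $\nabla_\theta\Exvc[\pi_\theta]{Y}{\Lambda}=h_1(\Lambda)\Varc[\pi_\theta]{Y}{\Lambda}$. The only cosmetic differences are that you work with the negative log-likelihood and a path integral where the paper uses the log-likelihood and the mean value theorem.
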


This theorem's conclusions and assumptions can readily be seen to be identical
to those of Theorem \ref{thmDependentDataProgramming} in the main body of 
the paper, except that they apply to the slightly more general case of 
arbitrary $h$, rather than $h$ of the explicit form described in the body.
Therefore, in order to prove Theorem \ref{thmDependentDataProgramming},
it suffices to prove Theorem \ref{thmDataProgramming}, which we will do 
in Section \ref{sAppProofThmDP}.

\section{Theoretical Results for Independent Model}

For the independent model, we can obtain a more specific version of
Theorem \ref{thmDataProgramming}.  In the independent model, the 
variables are, as before, $\Lambda \in \{-1,0,1\}^m$ and $Y \in \{-1,1\}$.
The sufficient statistics are $\Lambda_i Y$ and $\Lambda_i^2$.

To produce results that make intuitive sense, we also define the alternate
parameterization
\[
  \Probc[\pi]{\Lambda_i}{Y}
  =
  \begin{cases}
    \beta_i \frac{1 + \gamma_i}{2} & \Lambda_i = Y \\
    (1 - \beta_i) & \Lambda = 0 \\
    \beta_i \frac{1 - \gamma_i}{2} & \Lambda_i = -Y
  \end{cases}.
\]
In comparison to the parameters used in the body of the paper, 
we have
\[
  \alpha_i = \frac{1 + \gamma_i}{2}.
\]

Now, we are concerned with models that are feasible.  For a model to be
feasible (i.e. for $\theta \in \Theta$),
we require that it satisfy, for some constants $\gamma_{\min} > 0$,
$\gamma_{\max} > 0$, and $\beta_{\min}$,
\[
  \gamma_{\min} \le \gamma_i \le \gamma_{\max}
  \qquad
  \beta_{\min} \le \beta_i \le \frac{1}{2}.
\]
For $0 \le \beta \le 1$ and $-1 \le \gamma \le 1$.

For this model, we can prove the following corollary to Theorem
\ref{thmDataProgramming}

\begin{restatable}{rcorollary}{corDataProgrammingIndep}
  \label{corDataProgrammingIndep}
  Suppose that we run Algorithm \ref{algDataProgramming} on an
  independent data programming
  specification that satisfies conditions
  (\ref{eqnDataProgAssumption3}),
  (\ref{eqnDataProgAssumption4}),
  (\ref{eqnDataProgAssumption5}), and
  (\ref{eqnDataProgAssumption6}).
  Furthermore, assume that the number of labeling functions we use
  satisfies
  \[
    m
    \ge
    \frac{
      9.34 \artanh(\gamma_{\max})
    }{
      (\gamma \beta)_{\min} \gamma_{\min}^2
    }
    \log\left(
      \frac{24 m}{\beta_{\min}}
    \right).
  \]
  Suppose further that, for some parameter $\epsilon > 0$, we use
  step size
  \[
    \eta = \frac{\beta_{\min} \epsilon^2}{16}
  \]
  and our dataset is of a size that satisfies
  \[
    \Abs{S}
    =
    \frac{32}{\beta_{\min}^2 \epsilon^2}
    \log\left(
      \frac{2 \norm{\theta_0 - \theta^*}^2}{\epsilon}
    \right).
  \]
  Then, we can bound the expected parameter error with
  \[
    \Exv{\norm{\hat \theta - \theta^*}^2}
    \le
    \epsilon^2 M
  \]
  and the expected risk with
  \[
    \Exv{
      l(\hat w)
      -
      \min_w
      l(w)
    }
    \le
    \chi
    +
    \frac{
      \beta_{\min} \epsilon
    }{
      8 \rho
    }.
  \]
\end{restatable}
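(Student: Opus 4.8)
The plan is to derive Corollary~\ref{corDataProgrammingIndep} from Theorem~\ref{thmDataProgramming} by verifying that the independent family satisfies the two abstract hypotheses (\ref{eqnDataProgAssumption1}) and (\ref{eqnDataProgAssumption2}) with the explicit constant $c = \beta_{\min}/4$; conditions (\ref{eqnDataProgAssumption3})--(\ref{eqnDataProgAssumption6}) are assumed directly in the corollary. Once both are established with this $c$, the conclusion follows by substitution: the theorem's step size $\eta = c\epsilon^2/4$ becomes $\beta_{\min}\epsilon^2/16$, the sample bound $\frac{2}{c^2\epsilon^2}\log(\cdots)$ becomes $\frac{32}{\beta_{\min}^2\epsilon^2}\log(\cdots)$, and the risk bound $\chi + \frac{c\epsilon}{2\rho}$ becomes $\chi + \frac{\beta_{\min}\epsilon}{8\rho}$, exactly as stated. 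So the real content is the two covariance/variance estimates, plus showing that the stated lower bound on $m$ is what makes (\ref{eqnDataProgAssumption2}) hold.

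For condition (\ref{eqnDataProgAssumption1}) I would first pass to the natural parameterization. Writing the per-function conditionals in exponential-family form shows that the natural parameter attached to $\Lambda_i Y$ is $a_i = \artanh(\gamma_i)$ (this is the origin of the $\artanh$ terms), while the one attached to $\Lambda_i^2$ encodes $\beta_i$. Because the model is a product over labeling functions given $Y$, the Fisher information is block diagonal with one $2\times 2$ block per function, each equal to the covariance of $(\Lambda_i Y, \Lambda_i^2)$. Using $\Lambda_i^2=\Lambda_i^4$ and $\Lambda_i^3=\Lambda_i$, these entries compute to simple expressions in $\beta_i,\gamma_i$ (e.g. $\Var{\Lambda_i^2}=\beta_i(1-\beta_i)$), and the block determinant factors as $\beta_i^2(1-\beta_i)(1-\gamma_i^2)$. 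I would then exhibit the moment-based unbiased estimator built from the empirical means of the sufficient statistics (whose covariance is the inverse Fisher information up to the sample count), and lower bound the minimum eigenvalue of each block uniformly over $\Theta$ using $\beta_i\ge\beta_{\min}$, yielding $\Cov{\hat\theta}\preceq (2cD)^{-1}I$ with $c=\beta_{\min}/4$. This step is essentially a bookkeeping computation.

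The main obstacle is condition (\ref{eqnDataProgAssumption2}), which requires $\Exv[(\lambda_2,y_2)\sim\pi_{\theta_2}]{\Varc[(\lambda_1,y_1)\sim\pi_{\theta_1}]{y_1}{\lambda_1=\lambda_2}}\le c/M$. For the independent model the posterior on $y_1$ given $\Lambda=\lambda$ has log-odds $L = 2\sum_i a_i^{(1)}\lambda_i$ with $a_i^{(1)}=\artanh(\gamma_i^{(1)})$, so the inner conditional variance equals $\sech^2(L/2)$. Bounding $\sech^2(L/2)\le 4e^{-|L|}$ reduces the task to controlling $\Exv{e^{-|L|}}$ when $\lambda=\lambda_2$ is drawn from $\mu_{\theta_2}$. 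The naive route of passing to the moment generating function $\Exv{e^{-Ly_2}}$ fails because that expectation is dominated by the rare wrong-sign configurations and does not decay in $m$; instead I would use concentration of $S=Ly_2=\sum_i 2a_i^{(1)}\lambda_{2,i}y_2$, a sum of independent terms with positive mean $\mu=2\sum_i a_i^{(1)}\beta_i^{(2)}\gamma_i^{(2)}\ge 2m\,\artanh(\gamma_{\min})(\gamma\beta)_{\min}$ and increments bounded by $2a_i^{(1)}\le 2\artanh(\gamma_{\max})$. Splitting $\Exv{e^{-|S|}}\le e^{-\mu/2}+\Prob{S\le \mu/2}$ and applying a Hoeffding lower-tail bound to the second term gives decay of order $\exp(-\Omega(m\,\gamma_{\min}^2(\gamma\beta)_{\min}^2/\artanh(\gamma_{\max})^2))$; the slower of the two exponents dictates the requirement. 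Demanding this be at most $c/M$ with $M=O(m)$ and solving for $m$ produces precisely the stated threshold $m\ge \frac{9.34\,\artanh(\gamma_{\max})}{(\gamma\beta)_{\min}\gamma_{\min}^2}\log(24m/\beta_{\min})$, where the constants emerge from optimizing the Chernoff exponent and from $M\asymp m$. The delicate part is choosing the split point and tracking constants so that the final numerical coefficients match; everything else is standard sub-Gaussian tail estimation.
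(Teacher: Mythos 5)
Your high-level plan is the paper's: verify (\ref{eqnDataProgAssumption1}) and (\ref{eqnDataProgAssumption2}) for the independent family with $c = \beta_{\min}/4$, then read the three displayed constants off Theorem \ref{thmDataProgramming}; your posterior-variance formula $\sech^2(\psi^T\Lambda)$ (with $\psi_i = \artanh(\gamma_i)$, the paper's Lemmas \ref{lemmagammaPsi} and \ref{lemmaSuffStatsMomentsCond}) and your split of the expectation at half the mean (the paper's Lemma \ref{lemmaExpectedVarCondBound}) are exactly right. The genuine gap is the concentration inequality. You invoke Hoeffding, and your own displayed rate, $\exp\left(-\Omega\left(m\gamma_{\min}^2(\gamma\beta)_{\min}^2/\artanh(\gamma_{\max})^2\right)\right)$, is \emph{quadratic} in $(\gamma\beta)_{\min}$ and in $1/\artanh(\gamma_{\max})$, while the corollary's hypothesis on $m$ is \emph{linear} in both; these cannot be reconciled, so the claim that Hoeffding ``produces precisely the stated threshold'' contradicts the exponent you derived. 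Concretely, for $m$ at the stated threshold the Hoeffding exponent is
\[
  \frac{9.34\,(\gamma\beta)_{\min}}{8\,\artanh(\gamma_{\max})}\log\left(\frac{24m}{\beta_{\min}}\right)
  \le
  \frac{9.34}{16}\log\left(\frac{24m}{\beta_{\min}}\right),
\]
since every feasible model has $\beta_i \le \tfrac12$ and $\artanh(\gamma_{\max}) \ge \gamma_i$, hence $(\gamma\beta)_{\min} \le \tfrac12\artanh(\gamma_{\max})$. Because $9.34/16 < 1$, the requirement $3e^{-(\cdot)} \le c/M = \beta_{\min}/(8m)$ fails, so (\ref{eqnDataProgAssumption2}) is not verified under the stated hypothesis. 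The paper's Lemma \ref{lemmaConcentrationLambdaY} instead uses Bernstein's inequality: the increments $\hat\psi_i\Lambda_iY$ have variance $\hat\psi_i^2(\beta_i - \beta_i^2\gamma_i^2)$, so the exponent's denominator is $6\sum_i \hat\psi_i^2(\beta_i-\beta_i^2\gamma_i^2) + 2\artanh(\gamma_{\max})\,t$ rather than the squared ranges; this saves one factor of $\beta_i$ and one factor of $\artanh(\gamma_{\max})$, which is exactly what turns your quadratic dependence into the linear one in the statement (and is where $9.34 = 28/3$ comes from). Replacing Hoeffding by Bernstein is not constant bookkeeping; it is the step the stated bound on $m$ depends on.

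A secondary issue concerns (\ref{eqnDataProgAssumption1}). You correctly note the Fisher information is block diagonal with within-block covariance $\Cov{\Lambda_iY,\,\Lambda_i^2} = \beta_i\gamma_i(1-\beta_i)$ and block determinant $\beta_i^2(1-\beta_i)(1-\gamma_i^2)$, but $c = \beta_{\min}/4$ does not follow from that computation: the block's smallest eigenvalue is $\det/\lambda_{\max}$, which degrades as $\gamma_{\max} \to 1$; e.g.\ at $\beta_i = 0.3$, $\gamma_i = 0.8$ the block is $\left(\begin{smallmatrix} 0.242 & 0.168 \\ 0.168 & 0.210 \end{smallmatrix}\right)$ with smallest eigenvalue $\approx 0.057$, well below $\beta_{\min}/2 = 0.15$. (To be fair, the paper shares this imprecision---its proof cites only the diagonal variances $\beta_i - \beta_i^2\gamma_i^2 \ge \beta_{\min}/2$ from Lemma \ref{lemmaSuffStatsMoments} and ignores the within-block covariance---but your proposal computes the determinant and then asserts a bound it does not imply: carried out honestly, your route yields a strictly smaller $c$, which would change every constant in the corollary.) Apart from these two points, the reduction to Theorem \ref{thmDataProgramming} and the way $\eta$, $\Abs{S}$, and the risk bound follow from $c = \beta_{\min}/4$ match the paper.
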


We can see that if, as stated in the body of the paper, $\beta_i \ge 0.3$
and $0.8 \le \alpha_i \le 0.9$
(which is equivalent to $0.6 \le \gamma_i \le 0.8$), then
\[
  2000
  \ge
  1896.13
  =
  \frac{
    9.34 \artanh(0.8)
  }{
    0.3 \cdot 0.6^3
  }
  \log\left(
    \frac{24 \cdot 2000}{0.3}
  \right).
\]
This means that, as stated in the paper, $m = 2000$ is sufficient for this
corollary to hold with 
\[
  \Abs{S}
  =
  \frac{32}{0.3^2 \cdot \epsilon^2}
  \log\left(
    \frac{2 m (\artanh(0.8) - \artanh(0.6))^2}{\epsilon}
  \right)
  =
  \frac{356}{\epsilon^2}
  \log\left(
    \frac{m}{3 \epsilon}
  \right).
\]
Thus, proving Corollary \ref{corDataProgrammingIndep} is sufficient to prove
Theorem \ref{stmtIndepDataProgramming} from the body of the paper.  We 
will prove Corollary \ref{corDataProgrammingIndep} in Section
\ref{secAppIndepModelProofs}

\section{Proof of Theorem \ref{thmDataProgramming}}
\label{sAppProofThmDP}

First, we state some lemmas that will be useful in the proof to come.

\begin{restatable*}{rlemma}{lemmaMLEDerivatives}
  \label{lemmaMLEDerivatives}
  Given a family of maximum-entropy distributions
  \[
    \pi_{\theta}(x)
    =
    \frac{1}{Z_{\theta}}
    \exp(\theta^T h(x)),
  \]
  for some function of sufficient statistics $h: \Omega \mapsto \R^M$,
  if we let $J: \R^M \mapsto \R$ be the maximum log-likelihood objective
  for some event $A \subseteq \Omega$,
  \[
    J(\theta)
    =
    \log \Prob[x \sim \pi_\theta]{x \in A},
  \]
  then its gradient is
  \[
    \nabla J(\theta)
    =
    \Exvc[x \sim \pi_{\theta}]{h(x)}{x \in A}
    -
    \Exv[x \sim \pi_{\theta}]{h(x)}
  \]
  and its Hessian is
  \[
    \nabla^2 J(\theta)
    =
    \Covc[x \sim \pi_{\theta}]{h(x)}{x \in A}
    -
    \Cov[x \sim \pi_{\theta}]{h(x)}.
  \]
\end{restatable*}

\begin{restatable*}{rlemma}{lemmaHessianBound}
  \label{lemmaHessianBound}
  Suppose that we are looking at a distribution from a data programming
  label model.  That is, our maximum-entropy distribution can now be written in
  terms of two variables, the labeling function values
  $\lambda \in \{-1,0,1\}$ and the class $y \in \{-1,1\}$, as
  \[
    \pi_{\theta}(\lambda, y)
    =
    \frac{1}{Z_{\theta}}
    \exp(\theta^T h(\lambda, y)),
  \]
  where we assume without loss of generality that for some $M$,
  $h(\lambda, y) \in \R^M$ and $\norm{h(\lambda, y)}_{\infty} \le 1$.
  If we let $J: \R^M \mapsto \R$ be the maximum expected log-likelihood
  objective, under another distribution $\pi^*$, for the event associated
  with the observed labeling function values $\lambda$,
  \[
    J(\theta)
    =
    \Exv[(\lambda^*, y^*) \sim \pi^*]{
      \log \Prob[(\lambda, y) \sim \pi_\theta]{\lambda = \lambda^*}
    },
  \]
  then its Hessian can be bounded with
  \[
    \nabla^2 J(\theta)
    \preceq
    M I
    \Exv[(\lambda^*, y^*) \sim \pi^*]{
      \Varc[(\lambda,y) \sim \pi_{\theta}]{y}{\lambda = \lambda^*}
    }
    -
    \mathcal{I}(\theta),
  \]
  where $\mathcal{I}(\theta)$ is the Fisher information.
\end{restatable*}

\begin{restatable*}{rlemma}{lemmaSuffCondConvex}
  \label{lemmaSuffCondConvex}
  Suppose that we are looking at a data programming distribution, as
  described in the text of Lemma \ref{lemmaHessianBound}.
  Suppose further that we are concerned with some feasible set of parameters
  $\Theta \subset \R^M$, such that the any model with parameters in this
  space satisfies the following two conditions.

  First, for any $\theta \in \Theta$, learning the parameter $\theta$ from
  (full) samples from $\pi_{\theta}$ is possible, at least in some sense.
  More specifically, there exists an unbiased estimator $\hat \theta$
  that is a function of some number
  $D$ samples from $\pi_{\theta}$ (and is unbiased
  for all $\theta \in \Theta$) such that, for all $\theta \in \Theta$
  and for some $c > 0$,
  \[
    \Cov{\hat \theta} \preceq \frac{I}{2 c D}.
  \]

  Second, for any $\theta, \theta^* \in \Theta$,
  \[
    \Exv[(\lambda^*, y^*) \sim \pi^*]{
      \Varc[(\lambda,y) \sim \pi_{\theta}]{y}{\lambda = \lambda^*}
    }
    \le
    \frac{c}{M}.
  \]
  That is, we'll always be reasonably certain in our guess for the value of
  $y$, even if we are totally wrong about the true parameter $\theta^*$.

  Under these conditions, the function $J$ is strongly concave on $\Theta$
  with parameter of strong convexity $c$.
\end{restatable*}

\begin{restatable*}{rlemma}{lemmaSGDConvex}
  \label{lemmaSGDConvex}
  Suppose that we are looking at a data programming maximum likelihood
  estimation problem, as
  described in the text of Lemma \ref{lemmaHessianBound}.
  Suppose further that the objective function $J$ is strongly concave
  with parameter $c > 0$.

  If we run stochastic gradient descent on objective $J$, using unbiased
  samples from a true distribution $\pi_{\theta^*}$, where
  $\theta^* \in \Theta$, then if we use step size
  \[
    \eta
    =
    \frac{c \epsilon^2}{4}
  \]
  and run (using a fresh sample at each iteration) for $T$ steps, where
  \[
    T
    =
    \frac{2}{c^2 \epsilon^2}
    \log\left(
      \frac{2 \norm{\theta_0 - \theta^*}^2}{\epsilon}
    \right)
  \]
  then we can bound the expected parameter estimation error with
  \[
    \Exv{\norm{\hat \theta - \theta^*}^2}
    \le
    \epsilon^2 M.
  \]
\end{restatable*}

\begin{restatable*}{rlemma}{lemmaExpectedLoss}
  \label{lemmaExpectedLoss}
  Assume in our model that, without loss of generality, $\norm{f(x)} \le 1$ for
  all $x$, and that in our true model $\pi^*$, the class $y$ is independent
  of the features $f(x)$ given the labels $\lambda(x)$.

  Suppose that we now want to solve the expected loss minimization problem
  wherein we minimize the objective
  \[
    l(w)
    =
    \Exv[(x,y) \sim \pi^*]{
      \log(1 + \exp(-w^T f(x) y))
    }
    +
    \rho \norm{w}^2.
  \]
  We actually accomplish this by minimizing our noise-aware loss function,
  given our chosen parameter $\hat \theta$,
  \[
    l_{\hat \theta}(w)
    =
    \Exv[(\bar x, \bar y) \sim \pi^*]{
      \Exvc[(\Lambda, Y) \sim \pi_{\hat \theta}]{
        \log(1 + \exp(-w^T f(\bar x) Y))
      }{
        \Lambda = \lambda(\bar x)
      }
    }
    +
    \rho \norm{w}^2.
  \]
  In fact we can't even minimize this; rather, we will be minimizing the
  empirical noise-aware loss function, which is only this in expectation.
  Suppose that doing so produces an estimate $\hat w$ which satisfies,
  for some $\chi > 0$,
  \[
    \Exvc{
      l_{\hat \theta}(\hat w)
      -
      \min_w l_{\hat \theta}(w)
    }{\hat \theta}
    \le
    \chi.
  \]
  (Here, the expectation is taken with respect to only the random variable
  $\hat w$.)  Then, we can bound the expected risk with
  \[
    \Exv{
      l(\hat w)
      -
      \min_w
      l(w)
    }
    \le
    \chi
    +
    \frac{
      c \epsilon
    }{
      2 \rho
    }.
  \]
\end{restatable*}

Now, we restate and prove our main theorem.

\thmDataProgramming*

\begin{proof}
  The bounds on the expected parameter estimation error follow directly from
  Lemma \ref{lemmaSGDConvex}, and the remainder of the theorem follows
  directly from Lemma \ref{lemmaExpectedLoss}.
\end{proof}

\section{Proofs of Lemmas}

\lemmaMLEDerivatives

\begin{proof}
  For the gradient,
  \begin{align*}
    \nabla J(\theta)
    &=
    \nabla \log \Prob[\pi_\theta]{A} \\
    &=
    \nabla \log\left(
      \frac{
        \sum_{x \in A} \exp(\theta^T h(x))
      }{
        \sum_{x \in \Omega} \exp(\theta^T h(x))
      }
    \right) \\
    &=
    \nabla \log\left(
      \sum_{x \in A} \exp(\theta^T h(x))
    \right)
    -
    \nabla \log\left(
      \sum_{x \in \Omega} \exp(\theta^T h(x))
    \right) \\
    &=
    \frac{
      \sum_{x \in A} h(x) \exp(\theta^T h(x))
    }{
      \sum_{x \in A} \exp(\theta^T h(x))
    }
    -
    \frac{
      \sum_{x \in \Omega} h(x) \exp(\theta^T h(x))
    }{
      \sum_{x \in \Omega} \exp(\theta^T h(x))
    } \\
    &=
    \Exvc[x \sim \pi_{\theta}]{h(x)}{x \in A}
    -
    \Exv[x \sim \pi_{\theta}]{h(x)}.
  \end{align*}
  And for the Hessian,
  \begin{align*}
    \nabla^2 J(\theta)
    &=
    \nabla
    \frac{
      \sum_{x \in A} h(x) \exp(\theta^T h(x))
    }{
      \sum_{x \in A} \exp(\theta^T h(x))
    }
    -
    \nabla
    \frac{
      \sum_{x \in \Omega} h(x) \exp(\theta^T h(x))
    }{
      \sum_{x \in \Omega} \exp(\theta^T h(x))
    }
    \\ &=
    \frac{
      \sum_{x \in A} h(x) h(x)^T \exp(\theta^T h(x))
    }{
      \sum_{x \in A} \exp(\theta^T h(x))
    }
    -
    \frac{
      \left( \sum_{x \in A} h(x) \exp(\theta^T h(x)) \right)
      \left( \sum_{x \in A} h(x) \exp(\theta^T h(x)) \right)^T
    }{
      \left( \sum_{x \in A} \exp(\theta^T h(x)) \right)^2
    }
    \\ &\hspace{2em}-
    \left(
      \frac{
        \sum_{x \in \Omega} h(x) h(x)^T \exp(\theta^T h(x))
      }{
        \sum_{x \in \Omega} \exp(\theta^T h(x))
      }
      -
      \frac{
        \left( \sum_{x \in \Omega} h(x) \exp(\theta^T h(x)) \right)
        \left( \sum_{x \in \Omega} h(x) \exp(\theta^T h(x)) \right)^T
      }{
        \left( \sum_{x \in \Omega} \exp(\theta^T h(x)) \right)^2
      }
    \right)
    \\ &=
    \Exvc[x \sim \pi_{\theta}]{h(x) h(x)^T}{x \in A}
    -
    \Exvc[x \sim \pi_{\theta}]{h(x)}{x \in A}
    \Exvc[x \sim \pi_{\theta}]{h(x)}{x \in A}^T
    \\&\hspace{2em}-
    \left(
      \Exv[x \sim \pi_{\theta}]{h(x) h(x)^T}
      -
      \Exv[x \sim \pi_{\theta}]{h(x)}
      \Exv[x \sim \pi_{\theta}]{h(x)}^T
    \right)
    \\ &=
    \Covc[x \sim \pi_{\theta}]{h(x)}{x \in A}
    -
    \Cov[x \sim \pi_{\theta}]{h(x)}.
  \end{align*}
\end{proof}

\lemmaHessianBound

\begin{proof}
  From the result of Lemma \ref{lemmaMLEDerivatives}, we have that
  \begin{equation}
    \label{eqnLemmaHessianBound1}
    \nabla^2 J(\theta)
    =
    \Exv[(\lambda^*, y^*) \sim \pi^*]{
      \Covc[(\lambda, y) \sim \pi_{\theta}]{h(\lambda, y)}{\lambda = \lambda^*}
    }
    -
    \Cov[(\lambda, y) \sim \pi_{\theta}]{h(\lambda, y)}.
  \end{equation}
  We start byu defining $h_0(\lambda)$ and $h_1(\lambda)$ such that
  \[
    h(\lambda, y)
    =
    h(\lambda, 1) \frac{1 + y}{2}
    +
    h(\lambda, -1) \frac{1 - y}{2}
    =
    \frac{h(\lambda, 1) + h(\lambda, -1)}{2}
    +
    y \frac{h(\lambda, 1) - h(\lambda, -1)}{2}
    =
    h_0(\lambda) + y h_1(\lambda).
  \]
  This allows us to reduce (\ref{eqnLemmaHessianBound1}) to
  \[
    \nabla^2 J(\theta)
    =
    \Exv[(\lambda^*, y^*) \sim \pi^*]{
      h_1(\lambda^*) h_1(\lambda^*)^T
      \Varc[(\lambda,y) \sim \pi_{\theta}]{y}{\lambda = \lambda^*}
    }
    -
    \Cov[(\lambda,y) \sim \pi_{\theta}]{h(\lambda,y)}.
  \]
  On the other hand, the Fisher information of this model at $\theta$ is
  \begin{align*}
    \mathcal{I}(\theta)
    &=
    \Exv{
      \left(
        \nabla_{\theta} \log \pi_{\theta}(x)
      \right)^2
    }
    \\ &=
    \Exv{
      \left(
        \nabla_{\theta} \log\left(
          \frac{
            \exp(\theta^T h(x))
          }{
            \sum_{z \in \Omega}
            \exp(\theta^T h(z))
          }
        \right)
      \right)^2
    }
    \\ &=
    \Exv{
      \left(
        \nabla_{\theta} \log\left(
          \exp(\theta^T h(x))
        \right)
        -
        \nabla_{\theta} \log\left(
          \sum_{z \in \Omega}
          \exp(\theta^T h(z))
        \right)
      \right)^2
    }
    \\ &=
    \Exv{
      \left(
        h(x)
        -
        \frac{
          \sum_{z \in \Omega}
          h(z) \exp(\theta^T h(z))
        }{
          \sum_{z \in \Omega}
          \exp(\theta^T h(z))
        }
      \right)^2
    }
    \\ &=
    \Exv{
      \left(
        h(x)
        -
        \Exv{h(z)}
      \right)^2
    }
    \\ &=
    \Cov{h(x)}.
  \end{align*}
  Therefore, we can write the second derivative of $J$ as
  \[
    \nabla^2 J(\theta)
    =
    \Exv[(\lambda^*, y^*) \sim \pi^*]{
      h_1(\lambda^*) h_1(\lambda^*)^T
      \Varc[(\lambda,y) \sim \pi_{\theta}]{y}{\lambda = \lambda^*}
    }
    -
    \mathcal{I}(\theta).
  \]
  If we apply the fact that
  \[
    h_1(\lambda^*) h_1(\lambda^*)^T
    \preceq
    I \norm{h_1(\lambda^*)}^2
    \preceq
    M I \norm{h_1(\lambda^*)}_{\infty}^2
    \preceq
    M I,
  \]
  then we can reduce this to
  \[
    \nabla^2 J(\theta)
    \preceq
    M I 
    \Exv[(\lambda^*, y^*) \sim \pi^*]{
      \Varc[(\lambda,y) \sim \pi_{\theta}]{y}{\lambda = \lambda^*}
    }
    -
    \mathcal{I}(\theta).
  \]
  This is the desired result.
\end{proof}

\lemmaSuffCondConvex

\begin{proof}
  From the Cram{\'e}r-Rao bound, we know in general
  that the variance of any unbiased
  estimator is bounded by the reciprocal of the Fisher information
  \[
    \Cov{\hat \theta}
    \succeq
    \left( \mathcal{I}(\theta) \right)^{-1}.
  \]
  Since for the estimator described in the lemma statement, we have $D$
  independent samples from the distribution, it follows that the Fisher
  information of this experiment is $D$ times the Fisher information of a 
  single sample.  Combining this with the bound in the lemma statement
  on the covariance, we get
  \[
    \frac{I}{2 c D}
    \succeq
    \Cov{\hat \theta}
    \succeq
    \left( D \mathcal{I}(\theta) \right)^{-1}.
  \]
  It follows that
  \[
    \mathcal{I}(\theta) \succeq 2 c I.
  \]
  On the other hand, also from the lemma statement, we can conclude that
  \[
    M I
    \Exv[(\lambda^*, y^*) \sim \pi^*]{
      \Varc[(\lambda,y) \sim \pi_{\theta}]{y}{\lambda = \lambda^*}
    }
    \preceq
    c I.
  \]
  Therefore, for all $\theta \in \Theta$,
  \[
    \nabla^2 J(\theta)
    \preceq
    M I 
    \Exv[(\lambda^*, y^*) \sim \pi^*]{
      \Varc[(\lambda,y) \sim \pi_{\theta}]{y}{\lambda = \lambda^*}
    }
    -
    \mathcal{I}(\theta)
    \preceq
    - c I.
  \]
  This implies that $J$ is strongly concave over $\Theta$, with constant $c$,
  as desired.
\end{proof}

\lemmaSGDConvex

\begin{proof}
  First, we note that, in the proof to follow,
  we can ignore the projection onto the feasible set
  $\Theta$, since this projection always takes us closer to the optimum
  $\theta^*$.

  If we track the expected distance to the optimum $\theta^*$, then 
  at the next timestep,
  \[
    \norm{\theta_{t+1} - \theta^*}^2
    =
    \norm{\theta_t - \theta^*}^2
    +
    2 \gamma (\theta_t - \theta^*) \nabla \tilde J(\theta_t)
    +
    \gamma^2 \norm{\nabla \tilde J_t(\theta_t)}^2.
  \]
  Since we can write our stochastic samples in the form
  \[
    \nabla \tilde J_t(\theta_t)
    =
    h(\lambda_t, y_t) - h(\bar \lambda_t, \bar y_t),
  \]
  for some samples $\lambda_t$, $y_t$, $\bar \lambda_t$, and $\bar y_t$,
  we can conclude that
  \[
    \norm{\nabla \tilde J_t(\theta_t)}^2
    \le
    M \norm{\nabla \tilde J_t(\theta_t)}_{\infty}^2
    \le
    4 M.
  \]
  Therefore, taking the expected value conditioned on the filtration,
  \[
    \Exvc{\norm{\theta_{t+1} - \theta^*}^2}{\F_t}
    =
    \norm{\theta_t - \theta^*}^2
    +
    2 \gamma (\theta_t - \theta^*) \nabla J(\theta_t)
    +
    4 \gamma^2 M.
  \]
  Since $J$ is strongly concave,
  \[
    (\theta_t - \theta^*) \nabla J(\theta_t)
    \le
    -c \norm{\theta_t - \theta^*}^2;
  \]
  and so,
  \[
    \Exvc{\norm{\theta_{t+1} - \theta^*}^2}{\F_t}
    \le
    (1 - 2 \gamma c) \norm{\theta_t - \theta^*}^2
    +
    4 \gamma^2 M.
  \]
  If we take the full expectation and subtract the fixed point from both sides,
  \[
    \Exv{\norm{\theta_{t+1} - \theta^*}^2}
    -
    \frac{2 \gamma M}{c}
    \le
    (1 - 2 \gamma c) \Exv{\norm{\theta_t - \theta^*}^2}
    +
    4 \gamma^2 M
    -
    \frac{2 \gamma M}{c}
    =
    (1 - 2 \gamma c) \left(
      \Exv{\norm{\theta_t - \theta^*}^2}
      -
      \frac{2 \gamma M}{c}
    \right).
  \]
  Therefore,
  \[
    \Exv{\norm{\theta_t - \theta^*}^2}
    -
    \frac{2 \gamma M}{c}
    \le
    (1 - 2 \gamma c)^t \left(
      \norm{\theta_0 - \theta^*}^2
      -
      \frac{2 \gamma M}{c}
    \right),
  \]
  and so
  \[
    \Exv{\norm{\theta_t - \theta^*}^2}
    \le
    \exp(- 2 \gamma c t)
    \norm{\theta_0 - \theta^*}^2
    +
    \frac{2 \gamma M}{c}.
  \]
  In order to ensure that
  \[
    \Exv{\norm{\theta_t - \theta^*}^2}
    \le
    \epsilon^2,
  \]
  it therefore suffices to pick
  \[
    \gamma
    =
    \frac{c \epsilon^2}{4 M}
  \]
  and
  \[
    t
    =
    \frac{2 M}{c^2 \epsilon^2}
    \log\left(
      \frac{2 \norm{\theta_0 - \theta^*}^2}{\epsilon}
    \right).
  \]
  Substituting $\epsilon^2 \rightarrow \epsilon^2 M$ produces the desired
  result.
\end{proof}

\lemmaExpectedLoss

\begin{proof}
  (To simplify the symbols in this proof, we freely use $\theta$ when we 
  mean $\hat \theta$.)

  The loss function we want to minimize is, in expectation,
  \[
    l(w)
    =
    \Exv[(x,y) \sim \pi^*]{
      \log(1 + \exp(-w^T f(x) y))
    }
    +
    \rho \norm{w}^2.
  \]
  By the law of total expectation,
  \[
    l(w)
    =
    \Exv[(\bar x, \bar y) \sim \pi^*]{
      \Exvc[(x, y) \sim \pi^*]{
        \log(1 + \exp(-w^T f(\bar x) y))
      }{
        x = \bar x
      }
    }
    +
    \rho \norm{w}^2,
  \]
  and by our conditional independence assumption,
  \[
    l(w)
    =
    \Exv[(\bar x, \bar y) \sim \pi^*]{
      \Exvc[(x, y) \sim \pi^*]{
        \log(1 + \exp(-w^T f(\bar x) y))
      }{
        \lambda(x) = \lambda(\bar x)
      }
    }
    +
    \rho \norm{w}^2.
  \]
  Since we know from our assumptions that, for the optimum parameter
  $\theta^*$,
  \[
    \Prob[(x, y) \sim \pi^*]{\lambda(x) = \Lambda, y = Y}
    =
    \Prob[(\lambda, y) \sim \pi_{\theta^*}]{
      \lambda = \Lambda, y = Y
    },
  \]
  we can rewrite this as
  \[
    l(w)
    =
    \Exv[(\bar x, \bar y) \sim \pi^*]{
      \Exvc[(\Lambda, Y) \sim \pi_{\theta^*}]{
        \log(1 + \exp(-w^T f(\bar x) Y))
      }{
        \Lambda = \lambda(\bar x)
      }
    }
    +
    \rho \norm{w}^2.
  \]
  On the other hand, if we are minimizing the model we got from the previous
  step, we will be actually minimizing
  \[
    l_{\theta}(w)
    =
    \Exv[(\bar x, \bar y) \sim \pi^*]{
      \Exvc[(\Lambda, Y) \sim \pi_\theta]{
        \log(1 + \exp(-w^T f(\bar x) Y))
      }{
        \Lambda = \lambda(\bar x)
      }
    }
    +
    \rho \norm{w}^2.
  \]
  We can reduce this further by noticing that
  \begin{align*}
    &\Exvc[(\Lambda, Y) \sim \pi_\theta]{
      \log(1 + \exp(-w^T f(\bar x) Y))
    }{
      \Lambda = \lambda(\bar x)
    } \\
    &\hspace{2em}=
    \Exvc[(\Lambda, Y) \sim \pi_\theta]{
      \log(1 + \exp(-w^T f(\bar x)))
      \frac{1 + Y}{2}
      +
      \log(1 + \exp(w^T f(\bar x)))
      \frac{1 - Y}{2}
    }{
      \Lambda = \lambda(\bar x)
    } \\
    &\hspace{2em}=
    \frac{
      \log(1 + \exp(-w^T f(\bar x)))
      +
      \log(1 + \exp(w^T f(\bar x)))
    }{
      2
    } \\
    &\hspace{4em}+
    \frac{
      \log(1 + \exp(-w^T f(\bar x)))
      -
      \log(1 + \exp(w^T f(\bar x)))
    }{
      2
    }
    \Exvc[(\Lambda, Y) \sim \pi_{\theta}]{
      Y
    }{
      \Lambda = \lambda(\bar x)
    } \\
    &\hspace{2em}=
    \frac{
      \log(1 + \exp(-w^T f(\bar x)))
      +
      \log(1 + \exp(w^T f(\bar x)))
    }{
      2
    } \\
    &\hspace{4em}-
    \frac{
      w^T f(\bar x)
    }{
      2
    }
    \Exvc[(\Lambda, Y) \sim \pi_{\theta}]{
      Y
    }{
      \Lambda = \lambda(\bar x)
    }.
  \end{align*}
  It follows that the difference between the loss functions will be
  \begin{align*}
    \Abs{
      l(w)
      -
      l_{\theta}(w)
    }
    &=
    \Abs{
      \Exv[(\bar x, \bar y) \sim \pi^*]{
        \frac{
          w^T f(\bar x)
        }{
          2
        }
        \left(
          \Exvc[(\Lambda, Y) \sim \pi_{\theta}]{
            Y
          }{
            \Lambda = \lambda(\bar x)
          }
          -
          \Exvc[(\Lambda, Y) \sim \pi_{\theta^*}]{
            Y
          }{
            \Lambda = \lambda(\bar x)
          }
        \right)
      }
    }.
  \end{align*}
  Now, we can compute that
  \begin{align*}
    \nabla_{\theta}
    \Exvc[(\Lambda, Y) \sim \pi_{\theta}]{
      Y
    }{
      \Lambda = \lambda
    }
    &=
    \nabla_{\theta}
    \frac{
      \exp(\theta^T h(\lambda, 1))
      -
      \exp(\theta^T h(\lambda, -1))
    }{
      \exp(\theta^T h(\lambda, 1))
      +
      \exp(\theta^T h(\lambda, -1))
    }
    \\ &=
    \nabla_{\theta}
    \frac{
      \exp(\theta^T h_1(\lambda))
      -
      \exp(-\theta^T h_1(\lambda))
    }{
      \exp(\theta^T h_1(\lambda))
      +
      \exp(\theta^T h_1(\lambda))
    }
    \\ &=
    \nabla_{\theta}
    \tanh(\theta^T h_1(\lambda))
    \\ &=
    h_1(\lambda)
    \left(
      1
      -
      \tanh^2(\theta^T h_1(\lambda))
    \right)
    \\ &=
    h_1(\lambda)
    \Varc[(\Lambda, Y) \sim \pi_{\theta}]{
      Y
    }{
      \Lambda = \lambda
    }.
  \end{align*}
  It follows by the mean value theorem that for some $\psi$, a linear 
  combination of $\theta$ and $\theta^*$,
  \begin{align*}
    \Abs{
      l(w)
      -
      l_{\theta}(w)
    }
    &=
    \Abs{
      \Exv[(\bar x, \bar y) \sim \pi^*]{
        \frac{
          w^T f(\bar x)
        }{
          2
        }
        (\theta - \theta^*)^T 
        h_1(\lambda)
        \Varc[(\Lambda, Y) \sim \pi_{\psi}]{
          Y
        }{
          \Lambda = \lambda
        }
      }
    }.
  \end{align*}
  Since $\Theta$ is convex, clearly $\psi \in \Theta$.  From our
  assumption on the bound of the variance, we can conclude that
  \[
    \Exv[(\bar x, \bar y) \sim \pi^*]{
      \Varc[(\Lambda, Y) \sim \pi_{\psi}]{
        Y
      }{
        \Lambda = \lambda
      }
    }
    \le
    \frac{c}{M}.
  \]
  By the Cauchy-Schwarz inequality,
  \begin{align*}
    \Abs{
      l(w)
      -
      l_{\theta}(w)
    }
    &\le
    \frac{1}{2}
    \Abs{
      \Exv[(\bar x, \bar y) \sim \pi^*]{
        \norm{w} \norm{f(\bar x)}
        \norm{\theta - \theta^*} 
        \norm{h_1(\lambda)}
        \Varc[(\Lambda, Y) \sim \pi_{\psi}]{
          Y
        }{
          \Lambda = \lambda
        }
      }
    }.
  \end{align*}
  Since (by assumption) $\norm{f(x)} \le 1$ and
  $\norm{h_1(\lambda)} \le \sqrt{M}$,
  \begin{align*}
    \Abs{
      l(w)
      -
      l_{\theta}(w)
    }
    &\le
    \frac{
      \norm{w}
      \norm{\theta - \theta^*} 
      \sqrt{M}
    }{2}  
    \Abs{
      \Exv[(\bar x, \bar y) \sim \pi^*]{
        \Varc[(\Lambda, Y) \sim \pi_{\psi}]{
          Y
        }{
          \Lambda = \lambda
        }
      }
    }
    \\ &\le
    \frac{
      \norm{w}
      \norm{\theta - \theta^*} 
      \sqrt{M}
    }{2}
    \cdot
    \frac{c}{M}
    \\ &=
    \frac{
      c
      \norm{w}
      \norm{\theta - \theta^*} 
    }{
      2 \sqrt{M}
    }.
  \end{align*}
  Now, for any $w$ that could conceivably be a solution, it must be the
  case that
  \[
    \norm{w} \le \frac{1}{2 \rho},
  \]
  since otherwise the regularization term would be too large
  Therefore, for any possible solution $w$,
  \[
    \Abs{
      l(w)
      -
      l_{\theta}(w)
    }
    \le
    \frac{
      c
      \norm{\theta - \theta^*} 
    }{
      4 \rho \sqrt{M}
    }.
  \]
  Now, we apply the assumption that we are able to solve the empirical problem,
  producing an estimate $\hat w$ that satisfies
  \[
    \Exv{l_{\theta}(\hat w) - l_{\theta}(w^*_{\theta})}
    \le
    \chi,
  \]
  where $w^*_{\theta}$ is the true solution to
  \[
    w^*_{\theta}
    =
    \arg \min_w
    l_{\theta}(w).
  \]
  Therefore,
  \begin{align*}
    \Exv{
      l(\hat w)
      -
      l(w^*)
    }
    &=
    \Exv{
      l_{\theta}(\hat w)
      -
      l_{\theta}(w^*_{\theta})
      +
      l_{\theta}(w^*_{\theta})
      -
      l_{\theta}(\hat w)
      +
      l(\hat w)
      -
      l(w^*)
    }
    \\ &\le
    \chi
    +
    \Exv{
      l_{\theta}(w^*)
      -
      l_{\theta}(\hat w)
      +
      l(\hat w)
      -
      l(w^*)
    }
    \\ &\le
    \chi
    +
    \Exv{
      \Abs{
        l_{\theta}(w^*)
        -
        l(w^*)
      }
      +
      \Abs{
        l_{\theta}(\hat w)
        -
        l(\hat w)
      }
    }
    \\ &\le
    \chi
    +
    \Exv{
      \frac{
        c
        \norm{\theta - \theta^*} 
      }{
        2 \rho \sqrt{M}
      }
    }
    \\ &=
    \chi
    +
    \frac{
      c
    }{
      2 \rho \sqrt{M}
    }
    \Exv{\norm{\theta - \theta^*}}
    \\ &\le
    \chi
    +
    \frac{
      c
    }{
      2 \rho \sqrt{M}
    }
    \sqrt{\Exv{\norm{\theta - \theta^*}^2}}.
  \end{align*}
  We can now bound this using the result of Lemma \ref{lemmaSGDConvex},
  which results in
  \begin{align*}
    \Exv{
      l(\hat w)
      -
      l(w^*)
    }
    &\le
    \chi
    +
    \frac{
      c
    }{
      2 \rho \sqrt{M}
    }
    \sqrt{M \epsilon^2}
    \\ &=
    \chi
    +
    \frac{
      c \epsilon
    }{
      2 \rho
    }.
  \end{align*}
  This is the desired result.
\end{proof}

\section{Proofs of Results for the Independent Model}
\label{secAppIndepModelProofs}

To restate, in the independent model, the 
variables are, as before, $\Lambda \in \{-1,0,1\}^m$ and $Y \in \{-1,1\}$.
The sufficient statistics are $\Lambda_i Y$ and $\Lambda_i^2$.
That is, for expanded parameter $\theta = (\psi, \phi)$,
\[
  \pi_{\theta}(\Lambda, Y)
  =
  \frac{1}{Z}
  \exp(\psi^T \Lambda Y + \phi^T \Lambda^2).
\]
This can be combined with the simple assumption that $\Prob{Y} = \frac{1}{2}$
to complete a whole distribution.  Using this, we can prove the following
simple result about the moments of the sufficient statistics.
\begin{restatable}{rlemma}{lemmaSuffStatsMoments}
  \label{lemmaSuffStatsMoments}
  The expected values and covariances of the sufficient statistics are, for all
  $i \ne j$,
  \begin{align*}
    \Exv{\Lambda_i Y} &= \beta_i \gamma_i \\
    \Exv{\Lambda_i^2} &= \beta_i \\
    \Var{\Lambda_i Y} &= \beta_i - \beta_i^2 \gamma_i^2 \\
    \Var{\Lambda_i^2} &= \beta_i - \beta_i^2 \\
    \Cov{\Lambda_i Y, \Lambda_j Y} &= 0 \\
    \Cov{\Lambda_i^2, \Lambda_j^2} &= 0 \\
    \Cov{\Lambda_i Y, \Lambda_j^2} &= 0.
  \end{align*}
\end{restatable}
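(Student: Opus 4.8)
The plan is to exploit the product structure of $\pi_\theta$ to reduce every moment to a one- or two-coordinate computation, after first establishing that the coordinates $\Lambda_i$ are conditionally independent given $Y$. First I would observe that, since
\[
  \pi_\theta(\Lambda, Y)
  =
  \frac{1}{Z}\exp(\psi^T \Lambda Y + \phi^T \Lambda^2)
  =
  \frac{1}{Z}\prod_{i=1}^m \exp(\psi_i \Lambda_i Y + \phi_i \Lambda_i^2),
\]
the conditional law of $\Lambda$ given $Y$ factors over $i$; hence the $\Lambda_i$ are mutually independent given $Y$, and matching the per-coordinate factor against the stated alternate parameterization shows that each conditional marginal is exactly the three-case $(\beta_i,\gamma_i)$ distribution. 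From that parameterization I would extract the two conditional moments that drive everything,
\[
  \Exvc{\Lambda_i}{Y} = Y\beta_i\gamma_i,
  \qquad
  \Exvc{\Lambda_i^2}{Y} = \beta_i,
\]
the first from $\beta_i \tfrac{1+\gamma_i}{2}\cdot Y + \beta_i \tfrac{1-\gamma_i}{2}\cdot(-Y)$, and the second since $\Lambda_i^2 = 1$ precisely on the event $\Lambda_i \ne 0$, which has probability $\beta_i$.

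Next I would record the three algebraic facts that keep the answers clean: $Y^2 = 1$, and $\Lambda_i \in \{-1,0,1\}$ forces $\Lambda_i^3 = \Lambda_i$ and $\Lambda_i^4 = \Lambda_i^2$. The single-coordinate means then follow from the tower property: $\Exv{\Lambda_i Y} = \Exv{Y\,\Exvc{\Lambda_i}{Y}} = \Exv{Y^2}\beta_i\gamma_i = \beta_i\gamma_i$ and $\Exv{\Lambda_i^2} = \beta_i$. For the variances I would use $(\Lambda_i Y)^2 = \Lambda_i^2$ and $(\Lambda_i^2)^2 = \Lambda_i^2$, which give $\Exv{(\Lambda_i Y)^2} = \Exv{(\Lambda_i^2)^2} = \beta_i$, whence $\Var{\Lambda_i Y} = \beta_i - \beta_i^2\gamma_i^2$ and $\Var{\Lambda_i^2} = \beta_i - \beta_i^2$.

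For the three off-diagonal covariances ($i \ne j$) I would condition on $Y$ and apply conditional independence to split each cross moment into a product of conditional marginals, then take the outer expectation. Concretely, $\Exvc{\Lambda_i \Lambda_j}{Y} = \Exvc{\Lambda_i}{Y}\Exvc{\Lambda_j}{Y} = Y^2\beta_i\gamma_i\beta_j\gamma_j = \beta_i\gamma_i\beta_j\gamma_j$, and likewise $\Exvc{\Lambda_i^2 \Lambda_j^2}{Y} = \beta_i\beta_j$ and $\Exvc{\Lambda_i Y \Lambda_j^2}{Y} = Y^2\beta_i\gamma_i\beta_j = \beta_i\gamma_i\beta_j$. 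In each case the $Y$-dependence has already cancelled via $Y^2 = 1$, so the unconditional cross moment equals the product of the corresponding single-coordinate means computed above, and subtracting that product yields $0$.

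There is no real obstacle here: once conditional independence is in hand, the whole lemma is elementary bookkeeping. The one point that deserves care is the cancellation of the sign of $Y$ in the cross moments—conditional independence alone produces a factor $Y^2$, and it is exactly $Y^2 = 1$ that removes the $Y$-dependence and forces the covariances to vanish. This is also why the restriction to $i \ne j$ is essential: the analogous diagonal quantity does not vanish, since $\Lambda_i^3 = \Lambda_i$ gives $\Cov{\Lambda_i Y, \Lambda_i^2} = \beta_i\gamma_i(1-\beta_i) \ne 0$.
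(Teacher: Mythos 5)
Your proposal is correct and takes essentially the same route as the paper's proof: compute the means directly from the $(\beta_i,\gamma_i)$ parameterization, use $(\Lambda_i Y)^2 = \Lambda_i^4 = \Lambda_i^2$ for the variances, and handle the $i \ne j$ covariances by conditioning on $Y$ and factoring the cross moments via conditional independence. The only differences are that you explicitly derive the conditional independence from the product structure of $\pi_\theta$ (which the paper uses tacitly, since it is built into the independent model) and add the sanity check that the diagonal covariance does not vanish; both are harmless refinements of the same argument.
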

We also prove the following basic lemma that relates $\psi_i$ to $\gamma_i$.
\begin{restatable}{rlemma}{lemmagammaPsi}
  \label{lemmagammaPsi}
  It holds that
  \[
    \gamma_i = \tanh(\psi_i).
  \]
\end{restatable}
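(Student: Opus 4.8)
The plan is to compute the conditional distribution of a single labeling function output $\Lambda_i$ given the class $Y$ directly from the exponential-family form, and then read off $\gamma_i$ by matching against the alternate parameterization. First I would observe that, because the exponent $\psi^T \Lambda Y + \phi^T \Lambda^2 = \sum_i (\psi_i \Lambda_i Y + \phi_i \Lambda_i^2)$ decomposes coordinatewise and $\Prob{Y} = \tfrac{1}{2}$, the conditional law of $\Lambda_i$ given $Y$ depends only on the pair $(\psi_i, \phi_i)$; hence it suffices to analyze one coordinate in isolation.

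Next I would enumerate the three admissible values of $\Lambda_i$ and record their unnormalized conditional weights. When $\Lambda_i = Y$ we have $\Lambda_i Y = 1$ and $\Lambda_i^2 = 1$, giving weight $\exp(\psi_i + \phi_i)$; when $\Lambda_i = 0$ both statistics vanish, giving weight $1$; and when $\Lambda_i = -Y$ we have $\Lambda_i Y = -1$ and $\Lambda_i^2 = 1$, giving weight $\exp(-\psi_i + \phi_i)$.

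I would then match these weights against the alternate parameterization. Summing and differencing the two nonzero-label cases gives
\[
  \Probc{\Lambda_i = Y}{Y} + \Probc{\Lambda_i = -Y}{Y} = \beta_i, \qquad \Probc{\Lambda_i = Y}{Y} - \Probc{\Lambda_i = -Y}{Y} = \beta_i \gamma_i,
\]
so that $\gamma_i$ equals the ratio of this difference to this sum. The crucial point is that in this ratio the common per-coordinate normalizing constant cancels, as does the shared factor $\exp(\phi_i)$ multiplying both the $\Lambda_i = Y$ and $\Lambda_i = -Y$ weights. What remains is
\[
  \gamma_i = \frac{\exp(\psi_i) - \exp(-\psi_i)}{\exp(\psi_i) + \exp(-\psi_i)} = \tanh(\psi_i),
\]
which is exactly the claim.

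The main obstacle here is not conceptual but organizational: the only things to get right are justifying the coordinatewise factorization so that I may legitimately condition on $Y$ and study $\Lambda_i$ alone, and then verifying that both the normalizer and the $\exp(\phi_i)$ term genuinely drop out of the defining ratio for $\gamma_i$. Once those cancellations are in place, the identification with $\tanh$ is immediate, and no delicate estimate is required.
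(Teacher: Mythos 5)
Your proposal is correct and follows essentially the same route as the paper's proof: the paper likewise writes the per-coordinate conditional weights $\exp(\psi_i+\phi_i)$, $1$, $\exp(-\psi_i+\phi_i)$ to express $\beta_i$ and $\beta_i\gamma_i$, then takes their ratio so that the normalizer and the common factor $\exp(\phi_i)$ cancel, leaving $\tanh(\psi_i)$. The only difference is that you spell out the coordinatewise factorization justifying the single-coordinate analysis, which the paper leaves implicit in the phrase ``from the definitions.''
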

We also make the following claim about feasible models.
\begin{restatable}{rlemma}{lemmaConcentrationLambdaY}
  \label{lemmaConcentrationLambdaY}
  For any feasible model, it will be the case that, for any other feasible 
  parameter vector $\hat \psi$,
  \[
    \Prob{
      \hat \psi^T \Lambda Y
      \le
      \frac{m}{2} \gamma_{\min} (\gamma \beta)_{\min}
    }
    \le
    \exp\left(
      -
      \frac{
        m (\gamma \beta)_{\min} \gamma_{\min}^2
      }{
        9.34 \artanh(\gamma_{\max})
      }
    \right).
  \]
\end{restatable}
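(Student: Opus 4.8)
The plan is to read $\hat\psi^T \Lambda Y = \sum_{i=1}^m \hat\psi_i \Lambda_i Y$ as a sum of \emph{independent}, bounded random variables and control its lower tail with a Chernoff bound. Independence holds because in the independent model the $\Lambda_i$ are conditionally independent given $Y$, and each $\Lambda_i Y$ has a law that does not depend on the sign of $Y$; hence the variables $X_i := \hat\psi_i \Lambda_i Y$ are in fact mutually independent. Each is bounded, $\Abs{X_i} \le \hat\psi_i \le \artanh(\gamma_{\max})$ by Lemma~\ref{lemmagammaPsi} and feasibility of $\hat\psi$, and by Lemma~\ref{lemmaSuffStatsMoments} its mean under the true model is $\Exv{X_i} = \hat\psi_i \beta_i \gamma_i$. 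Writing $b = \artanh(\gamma_{\max})$, feasibility gives $\hat\psi_i \ge \artanh(\gamma_{\min}) \ge \gamma_{\min}$ and $\beta_i\gamma_i \ge (\gamma\beta)_{\min}$, so $\mu := \Exv{\sum_i X_i} \ge m\,\artanh(\gamma_{\min})(\gamma\beta)_{\min} \ge m\gamma_{\min}(\gamma\beta)_{\min}$, which is exactly twice the threshold in the statement. Thus we are bounding the probability that the sum drops below half a lower bound on its own mean.

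For the Chernoff step I would write, for $s>0$, $\Prob{\sum_i X_i \le a} \le e^{sa}\prod_i \Exv{e^{-sX_i}}$ with $a = \tfrac{m}{2}\gamma_{\min}(\gamma\beta)_{\min}$. Expanding the three-point moment generating function and applying the elementary inequalities $e^{-u}\le 1-u+\tfrac{u^2}{2}$ and $e^{u}\le 1+u+\tfrac{u^2}{2}e^{u}$ for $u \ge 0$, together with $\log(1+x)\le x$, each factor satisfies
\[
  \log \Exv{e^{-sX_i}}
  \le
  -\,s\,\hat\psi_i\beta_i\gamma_i
  +
  \tfrac{1}{2}\,s^2 e^{sb}\,\hat\psi_i^2\beta_i ,
\]
so the full exponent is at most $-s(\mu - a) + \tfrac{1}{2} s^2 e^{sb} V$, where $V := \sum_i \hat\psi_i^2\beta_i$.

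The crux — and the step where naive concentration fails — is the treatment of the second-order term $V$. A generic range bound $V \le m b^2/2$ is too lossy: carried through the optimization it yields an exponent scaling like $(\gamma\beta)_{\min}^2/b^2$, which (as one checks for the parameter values used in the paper, e.g. $b=\artanh(0.8)$ and $(\gamma\beta)_{\min}=0.18$) is strictly weaker than the claimed $(\gamma\beta)_{\min}\gamma_{\min}^2/b$; the same happens for a worst-case Bernstein bound. The key observation is instead to \emph{couple} $V$ to the mean: since $\hat\psi_i/\gamma_i \le b/\gamma_{\min}$, we have $\hat\psi_i^2\beta_i = (\hat\psi_i/\gamma_i)(\hat\psi_i\beta_i\gamma_i) \le (b/\gamma_{\min})\,\hat\psi_i\beta_i\gamma_i$, and summing gives $V \le (b/\gamma_{\min})\,\mu$. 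This one inequality removes a factor of $(\gamma\beta)_{\min}$ and a factor of $b$ from the second-order term, restoring the correct scaling.

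Finally I would assemble the pieces: using $\mu - a \ge \mu/2$ the exponent is at most $\tfrac{s\mu}{2}\bigl(-1 + s e^{sb} b/\gamma_{\min}\bigr)$, and taking $s$ proportional to $\gamma_{\min}/b$ (so that $s e^{sb} b/\gamma_{\min} \le \tfrac12$) makes this at most $-\tfrac{s\mu}{4}$, which together with $\mu \ge m\artanh(\gamma_{\min})(\gamma\beta)_{\min}\ge m\gamma_{\min}(\gamma\beta)_{\min}$ gives an exponent of order $-\,m\gamma_{\min}^2(\gamma\beta)_{\min}/b$. The only genuinely fiddly part is tracking the $e^{sb}$ factor and the gap between $\artanh(\gamma_{\min})$ and $\gamma_{\min}$ tightly enough to certify the specific constant: the clean version above already produces a denominator in the $8$--$13$ range depending on how coarsely these are handled, so a careful (but conceptually routine) optimization over $s$ delivers the stated $9.34$.
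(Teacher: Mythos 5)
Your proposal is correct and is essentially the paper's own argument: the paper likewise treats $\hat\psi^T\Lambda Y=\sum_{i=1}^m\hat\psi_i\Lambda_iY$ as a sum of independent variables bounded by $\hat\psi_{\max}=\artanh(\gamma_{\max})$, applies a Bernstein lower-tail bound at threshold $t=\tfrac12\sum_i\hat\psi_i\beta_i\gamma_i$, and uses precisely your variance-to-mean coupling (in the form $\sum_i\hat\psi_i^2\beta_i\le\hat\psi_{\max}\sum_i\hat\psi_i\beta_i$ together with $\gamma_{\min}\sum_i\hat\psi_i\beta_i\le\sum_i\hat\psi_i\beta_i\gamma_i$) to arrive at the constant $28/3\approx 9.34$. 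The only deviation is that the paper invokes Bernstein's inequality outright, whereas you re-derive the moment-generating-function bound with the lossier elementary inequality carrying the factor $e^{sb}$, which in the worst case ($\gamma_{\min}$ near $1$) caps your achievable denominator near $11$ rather than $9.34$; substituting the standard Bernstein MGF bound in your Chernoff step closes this purely quantitative gap and recovers the paper's constant exactly.
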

We can also prove the following simple result about the conditional
covariances
\begin{restatable}{rlemma}{lemmaSuffStatsMomentsCond}
  \label{lemmaSuffStatsMomentsCond}
  The covariances of the sufficient statistics, conditioned
  on $\Lambda$, are for all $i \ne j$,
  \begin{align*}
    \Covc{\Lambda_i Y, \Lambda_j Y}{\Lambda} &= \Lambda_i \Lambda_j \sech^2(\psi^T \Lambda) \\
    \Covc{\Lambda_i^2, \Lambda_j^2}{\Lambda} &= 0.
  \end{align*}
\end{restatable}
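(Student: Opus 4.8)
The plan is to exploit the fact that conditioning on the full labeling-function vector $\Lambda$ removes all randomness except that of the single class variable $Y \in \{-1, 1\}$. First I would observe that, under this conditioning, every coordinate $\Lambda_i$ is fixed, so $\Lambda_i^2$ is a deterministic constant. Consequently the covariance of two such constants vanishes, giving $\Covc{\Lambda_i^2, \Lambda_j^2}{\Lambda} = 0$ immediately; this disposes of the second identity with no computation at all.

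For the first identity I would again use that $\Lambda_i$ and $\Lambda_j$ are constants once $\Lambda$ is fixed, so they pull out of the conditional covariance:
\[
  \Covc{\Lambda_i Y, \Lambda_j Y}{\Lambda}
  =
  \Lambda_i \Lambda_j \, \Covc{Y, Y}{\Lambda}
  =
  \Lambda_i \Lambda_j \, \Varc{Y}{\Lambda}.
\]
This reduces the entire claim to evaluating the conditional variance of $Y$.

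To compute $\Varc{Y}{\Lambda}$, I would write out the conditional law of $Y$. Since $\pi_\theta(\Lambda, Y) \propto \exp(\psi^T \Lambda Y + \phi^T \Lambda^2)$ and the term $\phi^T \Lambda^2$ does not depend on $Y$, it cancels in the conditional distribution, leaving $\Probc{Y}{\Lambda} \propto \exp(Y \, \psi^T \Lambda)$. This is exactly the two-point exponential family already analyzed via the decomposition $h(\lambda, y) = h_0(\lambda) + y\, h_1(\lambda)$ in the proof of Lemma \ref{lemmaExpectedLoss}, where here the part multiplying $Y$ is $\psi^T \Lambda$. From that computation (or directly, using $Y^2 = 1$) we have $\Exvc{Y}{\Lambda} = \tanh(\psi^T \Lambda)$, and therefore
\[
  \Varc{Y}{\Lambda}
  =
  1 - \tanh^2(\psi^T \Lambda)
  =
  \sech^2(\psi^T \Lambda).
\]
Substituting back yields $\Covc{\Lambda_i Y, \Lambda_j Y}{\Lambda} = \Lambda_i \Lambda_j \sech^2(\psi^T \Lambda)$, as claimed.

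There is no substantive obstacle here: the result is essentially a bookkeeping exercise once one recognizes that conditioning on $\Lambda$ collapses the model to a single sign variable. The only points requiring care are confirming that the $\phi^T \Lambda^2$ factor genuinely drops out of the conditional law (so that the answer depends on $\psi$ alone), and invoking the elementary identity $1 - \tanh^2 = \sech^2$ to match the stated form.
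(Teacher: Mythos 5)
Your proposal is correct and follows essentially the same route as the paper: pull the now-deterministic factors $\Lambda_i \Lambda_j$ out of the conditional covariance, reduce to $\Varc{Y}{\Lambda}$, and compute $\Exvc{Y}{\Lambda} = \tanh(\psi^T \Lambda)$ from the conditional law (the paper plugs in the joint formula and lets the $\phi^T \Lambda^2$ terms cancel in the ratio, which is exactly your observation that they drop out of the conditional). Incidentally, your write-up is cleaner on one point: the paper's proof contains a typo stating $\Exvc{Y}{\Lambda} = \tanh^2(\psi^T \Lambda)$ where $\tanh(\psi^T \Lambda)$ is meant, though it uses the correct value in the final step.
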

We can combine these two results to bound the expected variance of these
conditional statistics.
\begin{restatable}{rlemma}{lemmaExpectedVarCondBound}
  \label{lemmaExpectedVarCondBound}
  If $\theta$ and $\theta^*$ are two feasible models, then for any $u$,
  \[
    \Exv[\theta^*]{\Varc[\theta]{Y}{\Lambda}}
    \le
    3
    \exp\left(
      -
      \frac{
        m \beta_{\min}^2 \gamma_{\min}^3
      }{
        8 \artanh(\gamma_{\max})
      }
    \right).
  \]
\end{restatable}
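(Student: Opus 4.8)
The plan is to reduce the conditional variance to a closed form and then control it using the concentration bound already established in Lemma~\ref{lemmaConcentrationLambdaY}. First I would observe that in the independent model the only sufficient statistic coupling $Y$ to $\Lambda$ is $\Lambda_i Y$, so the part of $h$ that is linear in $Y$ is exactly $\Lambda$ with coefficient vector $\psi$. Repeating the short computation from the proof of Lemma~\ref{lemmaExpectedLoss} gives $\Exvc[\theta]{Y}{\Lambda} = \tanh(\psi^T \Lambda)$, and since $Y \in \{-1,1\}$ the conditional variance is simply
\[
  \Varc[\theta]{Y}{\Lambda}
  =
  1 - \tanh^2(\psi^T \Lambda)
  =
  \sech^2(\psi^T \Lambda).
\]
Because $\sech$ is even and $Y^2 = 1$, I can equivalently write this as $\sech^2(\psi^T \Lambda Y)$, which is the form that matches the random variable $\psi^T \Lambda Y$ appearing in Lemma~\ref{lemmaConcentrationLambdaY}.

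Next I would use two elementary bounds on $\sech^2$: the trivial $\sech^2(x) \le 1$, and, for $x \ge 0$, the exponential tail $\sech^2(x) \le 4 e^{-2x}$ (from $\cosh(x) \ge \tfrac12 e^{x}$). Splitting the expectation over the bad event $B = \{\psi^T \Lambda Y \le t\}$ with threshold $t = \tfrac{m}{2}\gamma_{\min}(\gamma\beta)_{\min}$, I bound $\sech^2 \le 1$ on $B$ and $\sech^2 \le 4 e^{-2t}$ on $B^c$, yielding
\[
  \Exv[\theta^*]{\sech^2(\psi^T \Lambda Y)}
  \le
  \Prob[\theta^*]{B}
  +
  4 e^{-2t}.
\]
Here I apply Lemma~\ref{lemmaConcentrationLambdaY} with the true model $\theta^*$ supplying the distribution and the feasible parameter $\psi$ from $\theta$ playing the role of $\hat\psi$; this is legitimate precisely because that lemma holds for any other feasible parameter vector, so it bounds $\Prob[\theta^*]{B}$ by $\exp(-m(\gamma\beta)_{\min}\gamma_{\min}^2 / (9.34\,\artanh(\gamma_{\max})))$.

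It then remains to show both terms fold into $3\exp(-m\beta_{\min}^2\gamma_{\min}^3/(8\,\artanh(\gamma_{\max})))$. Writing $K$ for the target exponent, I would check that $\Prob[\theta^*]{B} \le e^{-K}$ using $(\gamma\beta)_{\min} \ge \gamma_{\min}\beta_{\min}$ together with $\beta_{\min} \le \tfrac12 < 8/9.34$, so that the mismatched constant $9.34$ is absorbed. For the second term, $2t = m\gamma_{\min}(\gamma\beta)_{\min} \ge m\gamma_{\min}^2\beta_{\min}$, and since $\artanh(\gamma_{\max}) \ge \gamma_{\max} \ge \gamma_{\min}$ and $\beta_{\min}\le\tfrac12$ this exponent is at least $16K$, so $4 e^{-2t} \le 2 e^{-K}$ in the feasible regime, giving the total $3 e^{-K}$. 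The main obstacle is purely bookkeeping: shepherding the mismatched constants ($9.34$ versus $8$, and the different powers of $\beta_{\min}$ and $\gamma_{\min}$) from Lemma~\ref{lemmaConcentrationLambdaY} into the cleaner exponent claimed here, and verifying that the smaller $4e^{-2t}$ contribution really is dominated so that the crude prefactor $3$ suffices.
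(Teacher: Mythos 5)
Your proposal follows essentially the same route as the paper's own proof: reduce the conditional variance to $\sech^2(\psi^T \Lambda)$ (the content of Lemma~\ref{lemmaSuffStatsMomentsCond}), split the expectation on the concentration event of Lemma~\ref{lemmaConcentrationLambdaY}, and kill the good-event term with an exponential bound on $\sech^2$. If anything your bookkeeping is more faithful to the stated exponent than the paper's, which stops at $m(\gamma\beta)_{\min}\gamma_{\min}^2/(9.34\artanh(\gamma_{\max}))$; the only loose end is that your step $4e^{-2t} \le 2e^{-K}$ needs $15K \ge \log 2$, but when that fails the claimed bound $3e^{-K}$ exceeds $1$ and holds vacuously since the left-hand side is a variance of a $\pm 1$ variable.
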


We can now proceed to restate and prove the main corollary of Theorem
\ref{thmDataProgramming} that applies in the independent case.

\corDataProgrammingIndep*

\begin{proof}
  In order to apply Theorem \ref{thmDataProgramming}, we have to verify
  all its conditions hold in the independent case.

  First, we notice that (\ref{eqnDataProgAssumption1}) is used only to bound
  the covariance of the sufficient statistics.   From Lemma
  \ref{lemmaSuffStatsMoments}, we know that these can be bounded by
  $\beta_i - \beta_i^2 \gamma_i^2 \ge \frac{\beta_\min}{2}$.  It follows
  that we can choose
  \[
    c = \frac{\beta_{\min}}{4},
  \]
  and we can consider (\ref{eqnDataProgAssumption1}) satisfied, for the 
  purposes of applying the theorem.

  Second, to verify (\ref{eqnDataProgAssumption2}), we can use Lemma
  \ref{lemmaExpectedVarCondBound}.  For this to work, we need
  \[
    3
    \exp\left(
      -
      \frac{
        m (\gamma \beta)_{\min} \gamma_{\min}^2
      }{
        9.34 \artanh(\gamma_{\max})
      }
    \right)
    \le
    \frac{c}{M}
    =
    \frac{\beta_{\min}}{8 m}.
  \]
  This happens whenever the number of labeling functions satisfies
  \[
    m
    \ge
    \frac{
      9.34 \artanh(\gamma_{\max})
    }{
      (\gamma \beta)_{\min} \gamma_{\min}^2
    }
    \log\left(
      \frac{24 m}{\beta_{\min}}
    \right).
  \]

  The remaining assumptions, (\ref{eqnDataProgAssumption3}),
  (\ref{eqnDataProgAssumption4}),
  (\ref{eqnDataProgAssumption5}), and
  (\ref{eqnDataProgAssumption6}), are satisfied directly by the 
  assumptions of this corollary.  So, we can apply Theorem
  \ref{thmDataProgramming}, which produces the desired result.
\end{proof}

\section{Proofs of Independent Model Lemmas}

\lemmaSuffStatsMoments*

\begin{proof}
  We prove each of the statements in turn.  For the first statement,
  \begin{align*}
    \Exv{\Lambda_i Y}
    &=
    \Prob{\Lambda_i = Y}
    -
    \Prob{\Lambda_i = -Y}
    \\&=
    \beta_i \frac{1 + \gamma_i}{2} - \beta_i \frac{1 - \gamma_i}{2}
    \\&=
    \beta_i \gamma_i.
  \end{align*}
  For the second statement,
  \begin{align*}
    \Exv{\Lambda_i^2}
    &=
    \Prob{\Lambda = Y}
    +
    \Prob{\Lambda = -Y}
    \\&=
    \beta_i \frac{1 + \gamma_i}{2} + \beta_i \frac{1 - \gamma_i}{2}
    \\&=
    \beta_i.
  \end{align*}
  For the remaining statements, we derive the second moments; converting these
  to an expression of the covariance is trivial.  For the third statement,
  \[
    \Exv{(\Lambda_i Y)^2}
    =
    \Exv{\Lambda_i^2 Y^2}
    =
    \Exv{\Lambda_i^2}
    =
    \beta_i.
  \]
  For the fourth statement,
  \[
    \Exv{(\Lambda_i^2)^2}
    =
    \Exv{\Lambda_i^4}
    =
    \Exv{\Lambda_i^2}
    =
    \beta_i.
  \]
  For subsequent statements, we first derive that
  \[
    \Exvc{\Lambda_i Y}{Y}
    =
    \beta_i \frac{1 + \gamma_i}{2}
    -
    \beta_i \frac{1 - \gamma_i}{2}
    =
    \beta_i \gamma_i
  \]
  and
  \[
    \Exvc{\Lambda_i^2}{Y}
    =
    \beta_i \frac{1 + \gamma_i}{2}
    +
    \beta_i \frac{1 - \gamma_i}{2}
    =
    \beta_i.
  \]
  Now, for the fifth statement,
  \[
    \Exv{(\Lambda_i Y)(\Lambda_j Y)}
    =
    \Exv{\Exvc{\Lambda_i Y}{Y} \Exvc{\Lambda_j Y}{Y}}
    =
    \beta_i \gamma_i \beta_j \gamma_j.
  \]
  For the sixth statement,
  \[
    \Exv{(\Lambda_i^2)(\Lambda_j^2)}
    =
    \Exv{\Exvc{\Lambda_i^2}{Y} \Exvc{\Lambda_i^2}{Y}}
    =
    \beta_i \beta_j.
  \]
  Finally, for the seventh statement,
  \[
    \Exv{(\Lambda_i Y)(\Lambda_j^2)}
    =
    \Exv{\Exvc{\Lambda_i Y}{Y} \Exvc{\Lambda_i^2}{Y}}
    =
    \beta_i \gamma_i \beta_j.
  \]
  This completes the proof.
\end{proof}

\lemmagammaPsi*

\begin{proof}
  From the definitions,
  \[
    \beta_i
    =
    \frac{
      \exp(\psi_i + \phi_i)
      +
      \exp(-\psi_i + \phi_i)
    }{
      \exp(\psi_i + \phi_i)
      +
      \exp(-\psi_i + \phi_i)
      +
      1
    }
  \]
  and
  \[
    \beta_i \gamma_i
    =
    \frac{
      \exp(\psi_i + \phi_i)
      -
      \exp(-\psi_i + \phi_i)
    }{
      \exp(\psi_i + \phi_i)
      +
      \exp(-\psi_i + \phi_i)
      +
      1
    }.
  \]
  Therefore,
  \[
    \gamma_i
    =
    \frac{
      \exp(\psi_i + \phi_i)
      -
      \exp(-\psi_i + \phi_i)
    }{
      \exp(\psi_i + \phi_i)
      +
      \exp(-\psi_i + \phi_i)
    }
    =
    \tanh(\psi_i),
  \]
  which is the desired result.
\end{proof}

\lemmaConcentrationLambdaY*

\begin{proof}
  We start by noticing that
  \[
    \hat \psi^T \Lambda Y
    =
    \sum_{i=1}^m \hat \psi_i \Lambda_i Y.
  \]
  Since in this model, all the $\Lambda_i Y$ are independent of each other,
  we can bound this sum using a concentration bound.  First, we note that
  \[
    \Abs{\hat \psi_i \Lambda_i Y} \le \hat \psi_i.
  \]
  Second, we note that
  \[
    \Exv{\hat \psi_i \Lambda_i Y} = \hat \psi_i \beta_i \gamma_i
  \]
  and
  \[
    \Var{\hat \psi_i \Lambda_i Y}
    =
    \hat \psi_i^2 \left( \beta_i - \beta_i^2 \gamma_i^2 \right)
  \]
  but
  \[
    \Abs{ \hat \psi_i \Lambda_i Y }
    \le
    \hat \psi_i
    \le
    \artanh(\gamma_{\max})
    \triangleq
    \hat \psi_{\max}
  \]
  because, for feasible models, by definition
  \[
    \gamma_{\min}
    \le
    \artanh(\gamma_{\min})
    \le
    \hat \psi_i
    \le
    \artanh(\gamma_{\max}).
  \]
  Therefore, applying Bernstein's inequality gives us, for any $t$,
  \[
    \Prob{
      \sum_{i=1}^m \hat \psi_i \Lambda_i Y
      -
      \sum_{i=1}^m \hat \psi_i \beta_i \gamma_i
      \le
      -t
    }
    \le
    \exp\left(
      -
      \frac{
        3 t^2
      }{
        6 \sum_{i=1}^m \hat \psi_i^2 \gamma_i \beta_i \gamma_i
        +
        2 \hat \psi_{\max} t
      }
    \right).
  \]
  It follows that, if we let
  \[
    t = \frac{1}{2} \sum_{i=1}^m \hat \psi_i \beta_i \gamma_i,
  \]
  then we get
  \begin{align*}
    \Prob{
      \sum_{i=1}^m \hat \psi_i \Lambda_i Y
      -
      \sum_{i=1}^m \hat \psi_i \beta_i \gamma_i
      \le
      -t
    }
    &\le
    \exp\left(
      -
      \frac{
        3
        \left(
          \frac{1}{2} \sum_{i=1}^m \hat \psi_i \beta_i \gamma_i
        \right)^2
      }{
        6 \sum_{i=1}^m \hat \psi_i^2 \gamma_i \beta_i \gamma_i
        +
        2 \hat \psi_{\max}
        \left(
          \frac{1}{2} \sum_{i=1}^m \hat \psi_i \beta_i \gamma_i
        \right)
      }
    \right)
    \\ &\le
    \exp\left(
      -
      \frac{
        3 \sum_{i=1}^m \hat \psi_i \beta_i \gamma_i
      }{
        24 \gamma_{\max} \hat \psi_{\max}
        +
        4 \hat \psi_{\max}
      }
    \right)
    \\ &\le
    \exp\left(
      -
      \frac{
        3 m (1 - \gamma_{\max})
      }{
        28 \hat \psi_{\max}
      }
    \right)
    \\ &\le
    \exp\left(
      -
      \frac{
        3 \left( \sum_{i=1}^m \hat \psi_i \beta_i \gamma_i \right)^2
      }{
        24 \sum_{i=1}^m \hat \psi_i^2 \beta_i
        +
        4 \hat \psi_{\max}
        \left(
          \sum_{i=1}^m \hat \psi_i \beta_i \gamma_i
        \right)
      }
    \right)
    \\ &\le
    \exp\left(
      -
      \frac{
        3 \gamma_{\min} \left( \sum_{i=1}^m \hat \psi_i \beta_i \right)
        \left( \sum_{i=1}^m \hat \psi_i \beta_i \gamma_i \right)
      }{
        24 \hat \psi_{\max} \sum_{i=1}^m \hat \psi_i \beta_i
        +
        4 \hat \psi_{\max}
        \left(
          \sum_{i=1}^m \hat \psi_i \beta_i
        \right)
      }
    \right)
    \\ &\le
    \exp\left(
      -
      \frac{
        3 \gamma_{\min} \left( \sum_{i=1}^m \hat \psi_i \beta_i \gamma_i \right)
      }{
        28 \hat \psi_{\max}
      }
    \right)
    \\ &\le
    \exp\left(
      -
      \frac{
        m \gamma_{\min}^2 (\gamma \beta)_{\min}
      }{
        9.34 \hat \psi_{\max}
      }
    \right).
  \end{align*}
  This is the desired expression.
\end{proof}

\lemmaSuffStatsMomentsCond*

\begin{proof}
  The second result is obvious, so it suffices to prove only the first
  result.  Clearly,
  \[
    \Covc{\Lambda_i Y, \Lambda_j Y}{\Lambda}
    =
    \Lambda_i \Lambda_j
    \Varc{Y}{\Lambda}
    =
    \Lambda_i \Lambda_j
    \left(
      1 - \Exvc{Y}{\Lambda}^2
    \right).
  \]
  Plugging into the distribution formula lets us conclude that
  \[
    \Exvc{Y}{\Lambda}
    =
    \frac{
      \exp(\psi^T \Lambda + \phi^T \Lambda^2)
      -
      \exp(-\psi^T \Lambda + \phi^T \Lambda^2)
    }{
      \exp(\psi^T \Lambda + \phi^T \Lambda^2)
      +
      \exp(-\psi^T \Lambda + \phi^T \Lambda^2)
    }
    =
    \tanh^2(\psi^T \Lambda),
  \]
  and so
  \[
    \Covc{\Lambda_i Y, \Lambda_j Y}{\Lambda}
    =
    \Lambda_i \Lambda_j
    \left(
      1 - \tanh^2(\psi^T \Lambda)
    \right)
    =
    \Lambda_i \Lambda_j \sech^2(\psi^T \Lambda),
  \]
  which is the desired result.
\end{proof}

\lemmaExpectedVarCondBound*

\begin{proof}
  First, we note that,  by the result of
  Lemma \ref{lemmaSuffStatsMomentsCond},
  \begin{align*}
    \Varc[\theta]{Y}{\Lambda}
    &=
    \sech^2(\psi^T \Lambda).
  \end{align*}
  Therefore,
  \[
    \Exv[\theta^*]{\Varc[\theta]{Y}{\Lambda}}
    =
    \Exv[\theta^*]{ \sech^2(\psi^T \Lambda) }.
  \]
  Applying Lemma \ref{lemmaConcentrationLambdaY}, we can bound this with
  \begin{align*}
    \Exv[\theta^*]{\Varc[\theta]{u^T \Lambda Y}{\Lambda}}
    &\le
    \left(
      \sech^2\left( \frac{m}{2} (\gamma \beta)_{\min} \gamma_{\min}^2 \right)
      +
      \exp\left(
        -
        \frac{
          m (\gamma \beta)_{\min} \gamma_{\min}^2
        }{
          9.34 \artanh(\gamma_{\max})
        }
      \right)
    \right) \\
    &\le
    \left(
      2 \exp\left( -\frac{m}{2} (\gamma \beta)_{\min} \gamma_{\min}^2 \right)
      +
      \exp\left(
        -
        \frac{
          m (\gamma \beta)_{\min} \gamma_{\min}^2
        }{
          9.34 \artanh(\gamma_{\max})
        }
      \right)
    \right) \\
    &\le
    3
    \exp\left(
      -
      \frac{
        m (\gamma \beta)_{\min} \gamma_{\min}^2
      }{
        9.34 \artanh(\gamma_{\max})
      }
    \right).
  \end{align*}
  This is the desired expression.
\end{proof}

\section{Additional Experimental Details}

\subsection{Relation Extraction Experiments}
\label{sec:relexps}
\subsubsection{Systems}
The original distantly-supervised experiments which we compare against as baselines--which we refer to as using the \textit{if-then-return (ITR)} approach of distant or programmatic supervision--were implemented using DeepDive, an open-source system for building extraction systems.\footnote{\small{\url{http://deepdive.stanford.edu}}} For our primary experiments,
we adapted these programs to the framework and approach described in this paper, directly utilizing distant supervision rules as labeling functions.

In the disease tagging user experiments, we used an early version of our new lightweight extraction framework based around data programming, formerly called DDLite~\cite{ehrenberg2016data}, now Snorkel.\footnote{\small{\url{http://snorkel.stanford.edu}}}
Snorkel is based around a Jupyter-notebook based interface, allowing users to iteratively develop labeling functions in Python for basic extraction tasks involving simple models. Details of the basic discriminative models used can be found in the Snorkel repository; in particular, Snorkel uses a simple logistic regression model with generic features defined in part over dependency paths\footnote{\small{\url{https://github.com/HazyResearch/treedlib}}}, and a basic LSTM  model implemented using the Theano library.\footnote{\small{\url{http://deeplearning.net/software/theano/}}}
Snorkel is currently under continued development, and all versions are open-source.

\subsubsection{Applications}
We consider three primary applications which involve the extraction of 
binary relation mentions of some specific type from unstructured text input data.
At a high level, all three system pipelines 
consist of an initial \textit{candidate extraction} phase which leverages some 
upstream model or suite of models to extract mentions of involved entities, and then 
considers each pair of such mentions that occurs within the same local neighborhood 
in a document as a \textit{candidate relation mention} to be potentially extracted.
In each case, the discriminative model that we are aiming to train--and that we evaluate in this paper--is a binary 
classifier over these candidate relation mentions, which will decide which ones to output as final true extractions.  In all tasks, we preprocessed raw input text with Stanford CoreNLP\footnote{\small{\url{stanfordnlp.github.io/CoreNLP/}}}, and then either used CoreNLP's NER module or our own entity-extraction models to extract entity mentions. Further details of the basic information extraction pipeline utilized can be seen in the tutorials of the systems used, and in the referenced papers below.

In the 2014 TAC-KBP Slot Filling task, which we also refer to as the News application, we train a set of extraction models for a variety of relation types from news articles~\cite{surdeanu2014overview}. In reported results in this paper, we average over scores from each relation type. We utilized CoreNLP's NER module for candidate extraction, and utilized CoreNLP outputs in developing the distant supervision rules / labeling functions for these tasks.  We also considered a slightly simpler discriminative model than the one submitted in the 2014 competition, as reported in \cite{angeli2014stanford}: namely, we did not include any joint factors in our model in this paper.

In the Genomics application, our goal with our collaborators at Stanford Medicine was to extract mentions of genes that if mutated may cause certain phenotypes (symptoms) linked to Mendelian diseases, for use in a clinical diagnostic setting. The code for this project is online, although it remains partially under development and thus some material from our collaborators is private.\footnote{\small{\url{https://github.com/HazyResearch/dd-genomics}}}

In the Pharmacogenomics application, our goal was to extract interactions between genes for use in downstream pharmacogenomics research analyses; full results and system details are reported in \cite{mallory2015large}.

In the Disease Tagging application, which we had our collaborators work on during a set of short hackathons as a user study, the goal was to tag mentions of human diseases in PubMed abstracts.  We report results of this hackathon in \cite{ehrenberg2016data}, as well as in our Snorkel tutorial online.

\subsubsection{Labeling Functions}
In general, we saw two broad types of labeling functions in both prior applications (when they were referred to as ``distant supervision rules'') and in our most recent user studies.  The first type of labeling function leverages some weak supervision signal, such as an external knowledgebase (as in traditional distant supervision), very similar to the example illustrated in Fig. 1(a).  All of the applications studied in this paper used some such labeling function or set of labeling functions.

The second type of labeling function uses simple heuristic patterns as positive or negative signals. For our text extraction examples, these heuristic patterns primarily consisted of regular expressions, also similar to the example pseudocode in Fig. 1(a).  Further specific details of both types of labeling functions, as well as others used, can be seen in the linked code repositories and referenced papers.

\subsection{Synthetic Experiments}
\begin{figure}
\centering
\begin{subfigure}{.33\textwidth}
  \centering
  \includegraphics[width=1.0\linewidth]{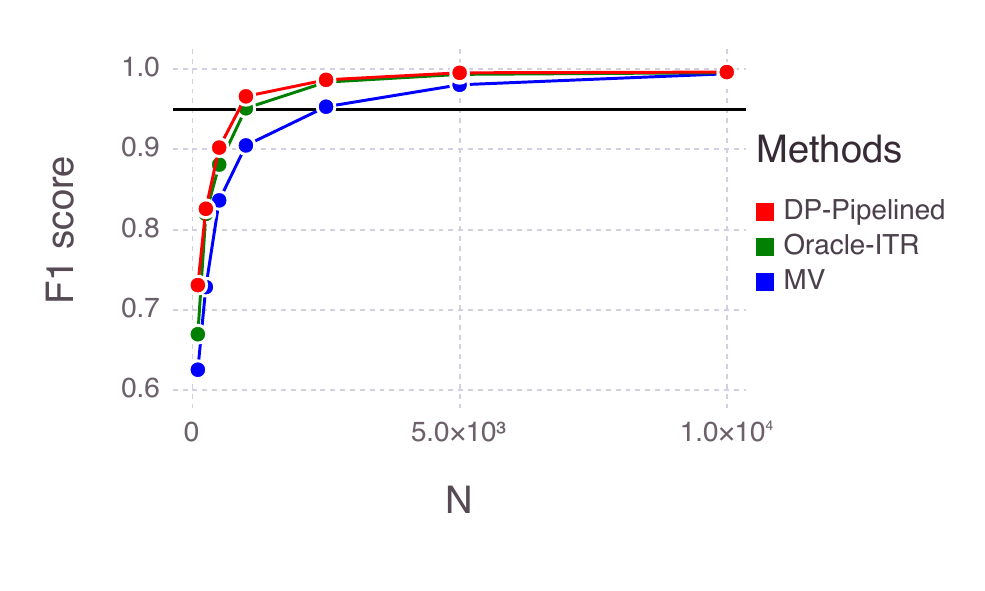}
  \caption{$m=20$}
  \label{fig:sub1}
\end{subfigure}%
\begin{subfigure}{.33\textwidth}
  \centering
  \includegraphics[width=1.0\linewidth]{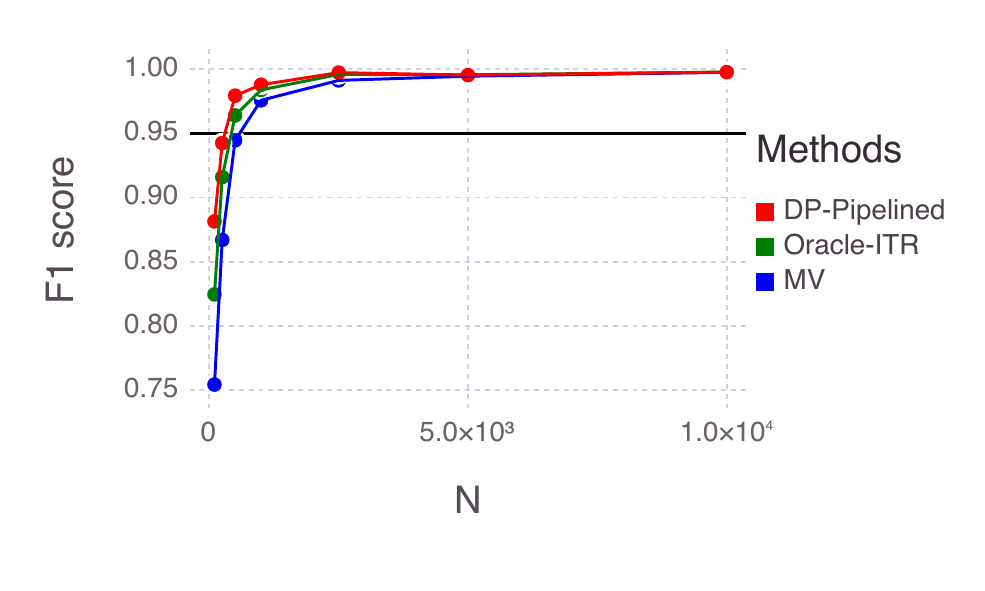}
  \caption{$m=100$}
  \label{fig:sub2}
\end{subfigure}
\begin{subfigure}{.33\textwidth}
  \centering
  \includegraphics[width=1.0\linewidth]{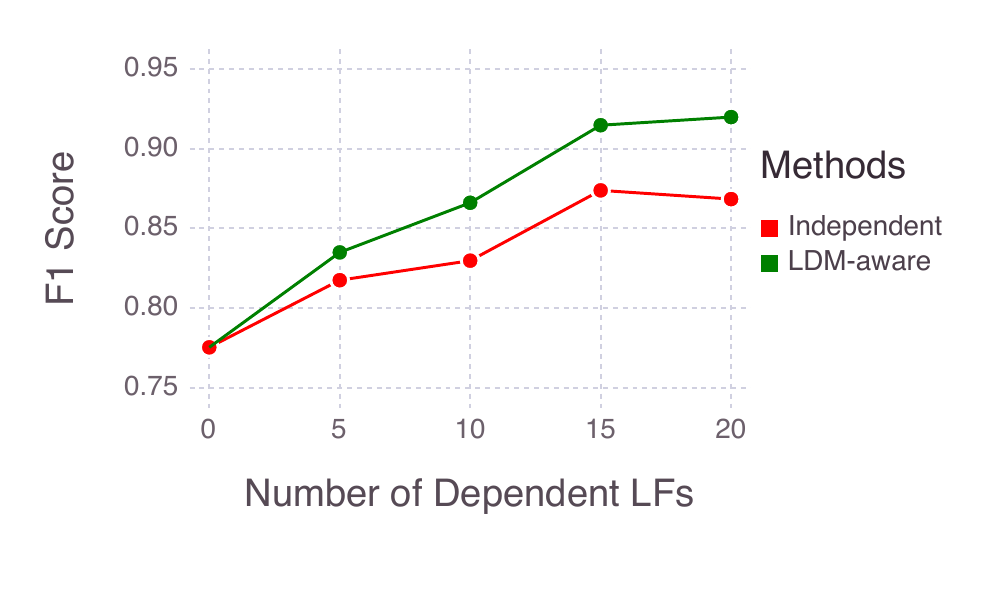}
  \caption{Adding dependencies.}
  \label{fig:sub3}
\end{subfigure}
\caption{Comparisons of data programming to two oracle methods on synthetic data.}
\label{synthetics}
\end{figure}
In Fig. 3(a-b), we ran synthetic experiments with labeling functions having constant coverage $\beta=0.1$,
and accuracy drawn from $\alpha\sim\textrm{Uniform}(\mu_\alpha-0.25,\mu_\alpha+0.25)$ where $\mu_\alpha=0.75$ in the
above plots.  In both cases we used $1000$ normally-drawn features having mean correlation with the true label class of 0.5.

In this case we compare data programming (DP-Pipelined) against two baselines.  First, we compare against an \textit{if-then-return}
setup where the ordering is optimal (ITR-Oracle).  Second, we compare against simple majority vote (MV).

In Fig. 3(c), we show an experiment where we add dependent labeling functions to a set of $m_{ind}=50$ independent
labeling functions, and either provided this dependency structure (LDM-Aware) or did not (Independent).  In this case,
the independent labeling functions had the same configurations as in (a-b), and the dependent labeling functions corresponded
to ``fixes'' or ``reinforces''-type dependent labeling functions.

\end{document}